\theoremstyle{plain}
\newtheorem{theorem}{Theorem}[section]
\newtheorem{lemma}[theorem]{Lemma}
\newtheorem{corollary}[theorem]{Corollary}
\theoremstyle{definition}
\newtheorem{assumption}[theorem]{Assumption}
\theoremstyle{remark}
\newcommand{\B}[1]{{\bm #1}}
\newcommand{\mac}[1]{{\mathcal #1}}
\newcommand{\mab}[1]{{\mathbb #1}}
\def\delequal{\mathrel{\ensurestackMath{\stackon[1pt]{=}{\scriptstyle\Delta}}}}
\newcommand{\figcaption}[1]{\def\@captype{figure}\caption{#1}}
\newcommand{\tblcaption}[1]{\def\@captype{table}\caption{#1}}
\title{Controlling Continuous Relaxation for \\
Combinatorial Optimization}
\author{Yuma Ichikawa \\
  Fujitsu Limited, Kanagawa, Japan\\ 
  Department of Basic Science, University of Tokyo
  }
\begin{document}

\maketitle

\begin{abstract}
    Unsupervised learning (UL)-based solvers for combinatorial optimization (CO) train a neural network that generates a soft solution by directly optimizing the CO objective using a continuous relaxation strategy.
    These solvers offer several advantages over traditional methods and other learning-based methods, particularly for large-scale CO problems.
    However, UL-based solvers face two practical issues: (I) an optimization issue, where UL-based solvers are easily trapped at local optima, and (II) a rounding issue, where UL-based solvers require artificial post-learning rounding from the continuous space back to the original discrete space, undermining the robustness of the results.
    This study proposes a \underline{C}ontinuous \underline{R}elaxation \underline{A}nnealing (\textbf{CRA}) strategy, an effective rounding-free learning method for UL-based solvers. 
    CRA introduces a penalty term that dynamically shifts from prioritizing continuous solutions, effectively smoothing the non-convexity of the objective function, to enforcing discreteness, eliminating artificial rounding.
    Experimental results demonstrate that CRA significantly enhances the performance of UL-based solvers, outperforming existing UL-based solvers and greedy algorithms in complex CO problems. Additionally, CRA effectively eliminates artificial rounding and accelerates the learning process.
\end{abstract}

\section{Introduction}\label{sec:introduction}
The objective of combinatorial optimization (CO) problems is to find the optimal solution from a discrete space, and these problems are fundamental in many real-world applications \citep{papadimitriou1998combinatorial}.
Most CO problems are \textsf{NP}-hard or \textsf{NP}-complete; making it challenging to solve large-scale problems within feasible computational time.
Traditional methods frequently depend on heuristics to find approximate solutions; however, considerable insights into the specific problems are required. 
Alternatively, problems can be formulated as integer linear programming (ILP) and solved using ILP solvers. However, ILP lacks scalability when applied to problems with graph structures.

Recently, several studies have used machine learning methods to handle CO problems by learning heuristics.
Most of these studies focus on supervised learning (SL)-based solvers \citep{hudson2021graph, joshi2019efficient, gasse2019exact, selsam2018learning, khalil2016learning}, which require optimal solutions to CO problems as supervision during training.
However, obtaining optimal solutions is challenging in practice, and SL-based solvers often fail to generalize well \citep{yehuda2020s}.
Reinforcement learning (RL)-based solvers \citep{ yao2019experimental, chen2019learning, yolcu2019learning, nazari2018reinforcement, khalil2017learning, bello2016neural} avoid the need for optimal solutions but often suffer from notoriously unstable training due to poor gradient estimation and hard explorations \citep{mnih2015human, tang2017exploration, espeholt2018impala}.
Unsupervised learning (UL)-based solvers have recently attracted much attention \citep{schuetz2022combinatorial, karalias2020erdos, amizadeh2018learning}.
UL-based solvers follow a continuous relaxation approach, training a UL model to output a \textit{soft solution} to the relaxed CO problem by directly optimizing a differentiable objective function, offering significantly stable and fast training even for large-scale CO problems.
Notably, the physics-inspired GNN (PI-GNN) solver \citep{schuetz2022combinatorial} employs graph neural networks (GNN) to automatically learn instance-specific heuristics and performs on par with or outperforms existing solvers for CO problems with millions of variables without optimal solutions. 
While these offer some advantages over traditional and other machine learning-based solvers, they face two practical issues.
The first issue is an optimization issues where UL-based solvers are easily trapped at local optima.
Due to this issue, \citet{angelini2023modern} demonstrated that the PI-GNN solver \citep{schuetz2022combinatorial} could not achieve results comparable to those of the degree-based greedy algorithm (DGA) \citep{angelini2019monte} on maximum independent set (MIS) problems in random regular graphs (RRG). 
\citet{wang2023unsupervised} also pointed out the importance of using dataset or history, and initializing the GNN with outputs from greedy solvers to help the PI-GNN solver overcome optimization challenges. 
This issue is a crucial bottleneck to the applicability of this method across various real-world applications.
The second issue relates to the inherent ambiguity of the continuous relaxation approach. 
This approach necessitates artificial rounding from the soft solution, which may include continuous values, back to the original discrete solution, potentially undermining the robustness of the results. 
While linear relaxation can provide an optimal solution for original discrete problems on bipartite graphs \citep{hoffman2010integral}, it typically leads to solutions with $1/2$ values, which is known to half-integrality \citep{nemhauser1974properties}, 
in which existing rounding methods \citep{schuetz2022graph, wang2022unsupervised} completely lose their robustness.
For NP-hard problems with graph structures, such as the MIS and MaxCut, semidefinite programming (SDP) relaxations have been proposed as effective approximation methods \citep{lovasz1979shannon, goemans1995improved}. 
However, these approaches rely on rounding techniques, such as spectral clustering \citep{von2007tutorial}, to transform relaxed solutions into feasible ones, which often fails to obtain optimal solutions.

To address these issues, we propose the \underline{C}ontinuous \underline{R}elaxation \underline{A}nnealing (\textbf{CRA}). 
CRA introduces a penalty term to control the continuity and discreteness of the relaxed variables, with a parameter $\gamma$ to regulate the intensity of this penalty term. 
When the parameter $\gamma$ is small, the relaxed variable tends to favor continuous solutions, whereas a large $\gamma$ biases them toward discrete values. 
This penalty term also effectively eliminates local optimum.
Moreover, a small $\gamma$ forces the loss function to approach a simple convex function, encouraging active exploration within the continuous space.
CRA also includes an annealing process, where $\gamma$ is gradually increased until the relaxed variables approach discrete values, eliminating the artificial rounding from the continuous to the original discrete space after learning. 
In this study, the solver that applies the CRA to the PI-GNN solver is referred to as the CRA-PI-GNN solver.
We also demonstrate the benefits of the CRA through experiments on benchmark CO problems, including MIS, maximum cut (MaxCut), and diverse bipartite matching (DBM) problems across graphs of varying sizes and degrees.
The experimental results show that the CRA significantly enhances the performance of the PI-GNN solver, outperforming the original PI-GNN solver, other state-of-the-art learning-based baselines, and greedy algorithms.
This improvement is achieved by directly optimizing each instance without any history, e.g., previous optimal solutions and the information of other instances.
Additionally, these experiments indicate that the CRA accelerates the learning process of the PI-GNN solver.
Notably, these results overcome the limitations pointed out by \citet{angelini2023modern, wang2023unsupervised}, highlighting the further potential of UL-based solvers.

\paragraph{Notation}
We use the shorthand expression $[N]=\{1, 2, \ldots, N\}$, where $N \in \mab{N}$. $I_{N} \in \mab{R}^{N \times N}$ denotes an $N\times N$ identity matrix, $\B{1}_{N}$ denotes the vector $(1, \ldots, 1)^{\top} \in \mab{R}^{N}$, and $\B{0}_{N}$ denotes the vector $(0, \ldots, 0)^{\top} \in \mab{R}^{N}$. 
$G(V, E)$ represents an undirected graph, where $V$ is the set of nodes with cardinality $|V| = N$, and $E \subseteq V \times V$ denotes the set of edges. 
For a graph $G(V, E)$, $A_{ij}$ denotes the adjacency matrix, where $A_{ij}=0$ if an edge $(i, j)$ does not exist and $A_{ij} > 0$ if the edge is present.

\section{Background}\label{sec:background}
\paragraph{Combinatorial optimization} 
The goal of this study is to solve the following CO problem.
\begin{equation}
    \min_{\B{x} \in \{0, 1\}^{N}} f(\B{x}; C)~~~\mathrm{s.t.}~~~\B{x} \in \mac{X}(C),
\end{equation}
where $C \in \mac{C}$ denotes instance-specific parameters, such as a graph $G=(V, E)$, and $\mac{C}$ represents the set of all possible instances. 
$f: \mac{X} \times \mac{C} \to \mab{R}$ denotes the cost function. 
Additionally, $\B{x} = (x_{i})_{1\le i \le N} \in \{0, 1\}^{N}$ is a binary vector to be optimized, and $\mac{X}(C) \subseteq \{0, 1\}^{N}$ denotes the feasible solution space, typically defined by the following equality and inequality constraints.
\begin{equation*}
    \mac{X}(C) = \{\B{x} \in \{0, 1\}^{N} \mid \forall i \in [I],~g_{i}(\B{x};C) \le 0,~\forall j \in [J],~ h_{j}(\B{x};C) = 0\}, ~~I, J \in \mab{N}, 
\end{equation*}
Here, for $i \in [I]$, $g_{i}: \{0,1\}^{N} \times \mac{C} \to \mab{R}$ denotes the inequality constraint, and for $j \in [J]$, $h_{j}: \{0, 1\}^{N} \times \mac{C} \to \mab{R}$ denotes the equality constraint. Following UL-based solvers \citep{wang2022unsupervised, schuetz2022combinatorial, karalias2020erdos}, we reformulate the constrained problem into an equivalent unconstrained form using 
the penalty method \citep{smith1997penalty}:
\begin{equation}
    \min_{\B{x}} l(\B{x}; C, \B{\lambda}),~~l(\B{x}; C, \B{\lambda}) \delequal f(\B{x}; C) + \sum_{i=1}^{I+J} \lambda_{i} v_{i}(\B{x}; C).
\end{equation}
Here, for all $i \in [I+J]$, $v: \{0, 1\}^{N} \times \mac{C} \to \mab{R}$ is the penalty term, which increases when the constraints are violated. 
For example, the penalty term is defined as follows:
\begin{align*}
    &\forall i \in [I], j \in [J],~v_{i}(\B{x};C) = \max (0, g_{i}(\B{x};C)),~~\forall j \in [J],~v_{j}(\B{x};C) = (h_{j}(\B{x};C))^{2}
\end{align*}
and $\B{\lambda} = (\lambda_{i})_{1\le i \le I+J} \in \mab{R}^{I+J}$ denotes the penalty parameters that control the trade-off between constraint satisfaction and cost optimization. 
Note that, as $\B{\lambda}$ increases, the penalty for constraint violations becomes more significant. 
In the following, we provide an example of this formulation.  

\paragraph{Example: MIS problem}
The MIS problem is a fundamental NP-hard problem \citep{karp2010reducibility}, defined as follows.
Given an undirected graph $G(V, E)$,  an independent set (IS) is a subset of nodes $\mac{I} \in V$ where any two nodes are not adjacent.
The MIS problem aims to find the largest IS, denoted as $\mac{I}^{\ast}$. 
In this study, $\rho$ denotes the IS density, defined as $\rho = |\mac{I}|/|V|$.
Following \citet{schuetz2022combinatorial}, a binary variable $x_{i}$ is assigned to each node $i \in V$. The MIS problem can be formulated as follows: 
\begin{equation}
    \label{eq:mis-qubo}
    f(\B{x}; G, \lambda) = - \sum_{i \in V} x_{i} + \lambda \sum_{(i,j) \in E} x_{i} x_{j},
\end{equation}
where the first term maximizes the number of nodes assigned a value of $1$, and the second term penalizes adjacent nodes assigned $1$ according to the penalty parameter $\lambda$.

\subsection{Unsupervised learning based solvers}\label{sec:background-UL-based-solvers}
Learning for CO problems involves training an algorithm $\mac{A}_{\B{\theta}}(\cdot): \mac{C} \to \{0, 1\}^{N}$ parameterized by a neural network (NN), where $\B{\theta}$ denotes the parameters. 
For a given instance $C \in \mac{C}$, this algorithm generates a valid solution $\hat{\B{x}} = \mac{A}_{\theta}(C) \in \mac{X}(C)$ and aims to minimize $f(\hat{\B{x}}; C)$.
Several approaches have been proposed to train $\mac{A}_{\theta}$. 
This study focuses on UL-based solvers, which do not use a labeled solution $\B{x}^{\ast} \in \mathrm{argmin}_{\B{x} \in \mac{X}(C)} f(\B{x}; C)$ during training \citep{wang2022unsupervised, schuetz2022combinatorial, karalias2020erdos, amizadeh2018learning}. 
In the following, we outline the details of the UL-based solvers.

The UL-based solvers employ a continuous relaxation strategy to train NN.
This continuous relaxation strategy reformulates a CO problem into a continuous optimization problem by converting discrete variables into continuous ones. 
A typical example of continuous relaxation is expressed as follows:
\begin{equation*}
    \min_{\B{p}} \hat{l}(\B{p}; C, \B{\lambda}),~~\hat{l}(\B{p}; C, \B{\lambda}) \delequal \hat{f}(\B{p}; C) + \sum_{i=1}^{m+p} \lambda_{i} \hat{v}_{i}(\B{p}; C), 
\end{equation*}
where $\B{p} = (p_{i})_{1\le i \le N} \in [0, 1]^{N}$ represents a set of relaxed continuous variables, where each binary variable $x_{i} \in \{0, 1\}$ is relaxed to a continuous counterpart $p_{i} \in [0, 1]$, and $\hat{f}: [0, 1]^{N} \times \mac{C} \to \mab{R}$ denotes the relaxed form of $f$ such that $\hat{f}(\B{x}; C) = f(\B{x}; C)$ for $\B{x} \in \{0, 1\}^{N}$. The relation between each constraint $v_{i}$ and its relaxation $\hat{v}_{i}$ is similar for $i \in [I+J]$, meaning that $\forall i \in [I+J],~\hat{v}_{i}(\B{x}; C) = v_{i}(\B{x}; C)$ for $\B{x} \in \{0, 1\}^{N}$. \citet{wang2022unsupervised} and \citet{schuetz2022combinatorial} formulated $\mac{A}_{\theta}(C)$ as the relaxed continuous variables, defined as $\mac{A}_{\theta}(\cdot): \mac{C} \to [0, 1]^{n}$. 
In the following discussions, we denote $\mac{A}_{\theta}$ as $\B{p}_{\theta}$ to make the parametrization of the relaxed variables explicit. 
Then, $\B{p}_{\theta}$ is optimized by directly minimizing the following label-independent function:
\begin{equation}
    \label{eq:gnn-loss}
    \hat{l}(\B{\theta}; C, \B{\lambda}) \delequal \hat{f}(\B{p}_{\theta}(C); C) + \sum_{i=1}^{I+J} \lambda_{i} \hat{v}_{i}(\B{p}_{\theta}(C); C).
\end{equation}
After training, the relaxed solution $\B{p}_{\B{\theta}}$ is converted into discrete variables using artificial rounding  $\B{p}_{\B{\theta}}$, where $\forall i \in [N],~x_{i} = \mathrm{int}(p_{\B{\theta}, i}(C))$ based on a threshold \citep{schuetz2022combinatorial}, or alternatively, a greedy method \citep{wang2022unsupervised}.
Two types of schemes for UL-based solvers have been developed based on this formulation.

\paragraph{(Type I) Learning generalized heuristics from history/data}
One approach, proposed by \citet{karalias2020erdos}, aims to automatically learn effective heuristics from historical dataset instances $\mac{D}=\{C_{\mu}\}_{\mu=1}^{P}$ and then apply these learned heuristics to a new instance $C^{\ast}$, through inference.
Note that this method assumes that either the training dataset is easily obtainable or that meaningful data augmentation is feasible.
Specifically, given a set of training instances $\mac{D}=(C_{\mu})$, sampled independently and identically from a distribution $P(C)$, the goal is to minimize the average loss function $\min_{\B{\theta}} \sum_{\mu=1}^{P} l(\B{\theta}; C_{\mu}, \B{\lambda})$.
However, this method does not guarantee quality for a test instance, $C^{\ast}$. 
Even if the training instances $\mac{D}$ are extensive and the test instance $C$ follows $P(C)$, low average performance $\mab{E}_{C \sim P(C)}[\hat{l}(\theta; C)]$ may not guarantee a low $l(\theta; C)$ for on a specific $C$. 
To address this issue, \citet{wang2023unsupervised} introduced a meta-learning approach where NNs aim to provide good initialization for future instances rather than direct solutions.

\paragraph{(Type II) Learning effective heuristics on a specific single instance}
Another approach, known as the PI-GNN solver \citep{schuetz2022combinatorial, schuetz2022graph}, automatically learns instance-specific heuristics for a single instance using the instance parameter $C$ by directly applying Eq. \eqref{eq:gnn-loss}. This approach addresses CO problems on graphs, where $C=G(V, E)$, and employs GNNs for the relaxed variables $p_{\theta}(G)$.
Here, an $L$-layered GNN is trained to directly minimize $\hat{l}(\B{\theta}; C, \B{\lambda})$, taking as input a graph $G$ and the embedding vectors on its nodes, and outputting the relaxed solution $\B{p}_{\B{\theta}}(G) \in [0, 1]^{N}$. 
A detailed description of GNNs is provided in Appendix \ref{subsec:graph-neural-network}.
Note that this setting is applicable even when the training dataset $\mac{D}$ is difficult to obtain.
The overparameterization of relaxed variables is expected to smooth the objective function by introducing additional parameters to the optimization problem, similar to the kernel method.
However, minimizing Eq.~\ref{eq:gnn-loss} for a single instance can be time-consuming compared to the inference process. Nonetheless, for large-scale CO problems, this approach has been reported to outperform other solvers in terms of both computational time and solution quality \citep{schuetz2022combinatorial, schuetz2022graph}.

Note that, while both UL-based solvers for multiple instances (Type I) and individual instances (Type II) are valuable, this study focuses on advancing the latter: a UL-based solver for a single instance.
Both types of solvers are applicable to cost functions that meet a particular requirement due to their reliance on a gradient-based algorithm to minimize Eq~\eqref{eq:gnn-loss}.
\begin{assumption}[Differentiable cost function]
The relaxed loss function $\hat{l}(\B{\theta}; C, \B{\lambda})$ and its partial derivative $\nicefrac{\partial \hat{l}(\B{\theta}; C, \B{\lambda})}{\partial \B{\theta}}$ are accessible during the optimization process.
\end{assumption}
These requirements encompass a nonlinear cost function and interactions involving many-body interactions, extending beyond simple two-body interactions.

\section{Continuous relaxation annealing for UL-based solvers}
In this section, we discuss the practical issues associated with UL-based solvers and then introduce continuous relaxation annealing (CRA) as a proposed solution.

\subsection{Motivation: practical issues of UL-based solvers}\label{sec:how-can-it-fail}
UL-based solvers (Type II) \citep{schuetz2022combinatorial, schuetz2022graph} are effective in addressing large-scale CO problems. 
However, these solvers present following two practical issues, highlighted in several recent studies \citep{wang2023unsupervised, angelini2023modern}. 
Additionally, we numerically validate these issues; see Appendix \ref{subsec:app-how-can-it-fail} for detailed results.

\paragraph{(I) Ambiguity in rounding method after learning}
UL-based solvers employ a continuous relaxation strategy to train NNs and then convert the relaxed continuous variables into discrete binary values through artificial rounding as discussed in Section \ref{sec:background-UL-based-solvers}.
This inherent ambiguity in continuous relaxation strategy often results in potential discrepancies between the optimal solutions of the original discrete CO problem and those of the relaxed continuous one.
Continuous relaxation expands the solution space, often producing continuous values that lower the cost compared to an optimal binary value.
Indeed, while linear relaxation can provide an optimal solution for discrete problems on bipartite graphs \citep{hoffman2010integral}, it typically results in solutions with $\nicefrac{1}{2}$ values, which is known to half-integrality \citep{nemhauser1974properties}.
Existing rounding methods \citep{schuetz2022graph, wang2022unsupervised} often lose robustness in these scenarios.
In practice, PI-GNN solver often outputs values near $\nicefrac{1}{2}$, underscoring the limitations of current rounding techniques for UL-based solvers.

\paragraph{(II) Difficulty in optimizing NNs}
Recently, \citet{angelini2023modern} demonstrated that PI-GNN solver falls short of achieving results comparable to those of the degree-based greedy algorithm (DGA) \citep{angelini2019monte} when solving the MIS problems on RRGs. 
\citet{angelini2023modern} further emphasized the importance of evaluating UL-based solvers on complex CO problems, where greedy algorithms typically perform worse.
A representative example is the MIS problems on RRGs with a constant degree $d>16$, where a clustering transition in the solution space creates barriers that impede optimization. 
Moreover, \citet{wang2023unsupervised} emphasized the importance of using training/historical datasets, $\mac{D}=\{C_{\mu}\}_{1\le \mu \le P}$, which contain various graphs and initialization using outputs from greedy solvers, such as DGA and RGA for MIS problems.
Their numerical analysis indicated that PI-GNN solver tends to get trapped in local optima when directly optimized directly for a single instance without leveraging a training dataset $\mac{D}$. 
However, in a practical setting, systematic methods for generating or collecting training datasets $\mac{D}$ to effectively avoid local optima remains unclear.  
Additionally, training on instances that do not contribute to escaping local optima is time-consuming.
Therefore, it is crucial to develop an effective UL-based solver that can operate on a single instance without relying on training data, $\mac{D}$.
Our numerical experiments, detailed in Appendix \ref{subsec:app-how-can-it-fail}, also confirmed this optimization issue. They demonstrated that as problem complexity increases, the PI-GNN solver is often drawn into trivial local optima, $\B{p}_{\B{\theta}}=\B{0}_{N}$, in certain problems. This entrapment results in prolonged plateaus that significantly slow down the learning process and, in especially challenging cases, can render learning entirely infeasible.
Our numerical experiments, detailed in Appendix \ref{subsec:app-how-can-it-fail}, also validated this optimization issue, demonstrating that as the problem complexity increases, PI-GNN solver tends to be absorbed into the trivial local optima $\B{p}_{\B{\theta}}=\B{0}_{N}$ in some problems, resulting in prolonged plateaus which significantly decelerates the learning process and, in particularly challenging cases, can render learning entirely infeasible.

\subsection{Continuous relaxation annealing}\label{subsec:continuou-relaxation-annealing}
\paragraph{Penalty term to control discreteness and continuity}
To address these issues, we propose a penalty term to control the balance between discreteness and continuity in the relaxed variables, formulated as follows:
\begin{equation}
    \label{eq:penalty-cost}
    \hat{r}(\B{p}; C, \B{\lambda}, \gamma) = \hat{l}(\B{p}; C, \B{\lambda}) + \gamma \Phi(\B{p}),~~
    \Phi(\B{p}) \delequal \sum_{i=1}^{N} (1-(2p_{i}-1)^{\alpha}),~~\alpha \in \{2n \mid n \in \mab{N}_{+}\},
\end{equation}
where $\gamma \in \mab{R}$ is a penalty parameter, and the even number $\alpha$ denote a curve rate.
When $\gamma$ is negative, i.e., $\gamma < 0$, the relaxed variables tend to favor the continuous space, smoothing the non-convexity of the objective function $\hat{l}(\B{p}; C, \B{\lambda})$ due to the convexity of the penalty term $\Phi(\B{p})$. 
In contrast, when $\gamma$ is positive, i.e., $\gamma > 0$, the relaxed variables tend to favor discrete space, smoothing out the continuous solution into discrete solution.
Formally, the following theorem holds as $\lambda$ approaches $\pm \infty$.
\begin{theorem}
    \label{theorem:gamma-annealing-limit}
    Assuming the objective function $\hat{l}(\B{p};C)$ is bounded within the domain $[0, 1]^{N}$, as $\gamma \to +\infty$, the relaxed solutions $\B{p}^{\ast} \in \mathrm{argmin}_{\B{p}} \hat{r}(\B{p}; C, \B{\lambda}, \gamma)$ converge to the original solutions $\B{x}^{\ast} \in \mathrm{argmin}_{\B{x}} l(\B{x}; C, \B{\lambda})$. Moreover, as $\gamma \to -\infty$, the loss function $\hat{r}(\B{p}; C, \B{\lambda}, \gamma)$ becomes convex, and the relaxed solution $\nicefrac{\B{1}_{N}}{2} = \mathrm{argmin}_{\B{p}} \hat{r}(\B{p}, C, \B{\lambda}, \gamma)$ is unique.
\end{theorem}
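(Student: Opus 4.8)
The plan is to base both limits on two elementary structural facts about the separable penalty $\Phi(\B{p})=\sum_{i}(1-(2p_i-1)^{\alpha})$. First, because $\alpha$ is even, each summand $\phi(p_i)=1-(2p_i-1)^{\alpha}$ satisfies $0\le\phi(p_i)\le 1$ on $[0,1]$, with $\phi(p_i)=0$ exactly at $p_i\in\{0,1\}$ and $\phi(p_i)=1$ exactly at $p_i=\tfrac12$; hence $\Phi\ge 0$ vanishes precisely on the vertex set $\{0,1\}^{N}$ and attains its unique maximum $N$ at $\B{1}_N/2$. Second, $\phi''(p_i)=-4\alpha(\alpha-1)(2p_i-1)^{\alpha-2}\le 0$, so $\Phi$ is concave and $\gamma\Phi$ is convex whenever $\gamma<0$. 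These two facts drive the $\gamma\to+\infty$ and $\gamma\to-\infty$ directions respectively, and $\hat r$ is continuous on the compact set $[0,1]^{N}$, so minimizers exist throughout.

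For $\gamma\to+\infty$ I would run a standard exterior-penalty argument. Let $M=\min_{\B{x}\in\{0,1\}^{N}} l(\B{x};C,\B{\lambda})$ with $\B{x}^\star$ attaining it. Evaluating $\hat r$ at the vertex $\B{x}^\star$, where $\Phi=0$ and $\hat l=l$, gives $\min_{\B{p}\in[0,1]^{N}}\hat r\le\hat r(\B{x}^\star)=M$, so any minimizer $\B{p}^\star_\gamma$ obeys $\hat l(\B{p}^\star_\gamma)+\gamma\Phi(\B{p}^\star_\gamma)\le M$. Using the assumed bound $|\hat l|\le B$ on $[0,1]^{N}$, this forces $\Phi(\B{p}^\star_\gamma)\le (M+B)/\gamma\to 0$. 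By compactness and continuity of $\Phi$, every subsequential limit $\B{x}_0$ of $\B{p}^\star_\gamma$ lies in the zero set of $\Phi$, i.e.\ is a vertex. Finally $\hat r\ge\hat l$ together with $\hat r(\B{p}^\star_\gamma)\le M$ gives $\hat l(\B{p}^\star_\gamma)\le M$; passing to the limit and using $\hat l(\B{x}_0)=l(\B{x}_0)$ yields $l(\B{x}_0)\le M$, so $\B{x}_0\in\mathrm{argmin}_{\B{x}} l$.

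For $\gamma\to-\infty$ I would separate convexity from identification of the minimizer. Convexity follows from concavity of $\Phi$: writing $\nabla^2\hat r=\nabla^2\hat l+\gamma\,\mathrm{diag}(\phi''(p_i))$, the penalty contributes a positive-semidefinite diagonal term when $\gamma<0$; for $\alpha=2$ this term is the constant $8|\gamma|I_N$, which dominates any fixed lower bound on the eigenvalues of $\nabla^2\hat l$ once $|\gamma|$ is large, making $\hat r$ strongly convex with a unique minimizer. To identify that minimizer I would again compare values: $\hat r(\B{p}^\star_\gamma)\le\hat r(\B{1}_N/2)=\hat l(\B{1}_N/2)+\gamma N$ rearranges, after dividing by $\gamma<0$, to $N-\Phi(\B{p}^\star_\gamma)\le 2B/|\gamma|\to 0$; hence $\Phi(\B{p}^\star_\gamma)\to N$, and since $\B{1}_N/2$ is the unique maximizer of $\Phi$, we get $\B{p}^\star_\gamma\to\B{1}_N/2$.

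The main obstacle is the convexity claim for general even $\alpha>2$: there $\phi''(p_i)=-4\alpha(\alpha-1)(2p_i-1)^{\alpha-2}$ vanishes at $p_i=\tfrac12$, so the penalty Hessian degenerates exactly at the target point $\B{1}_N/2$ and no finite $\gamma$ can offset a negative eigenvalue of $\nabla^2\hat l$ there. I would handle this by either (i) stating strong convexity only for $\alpha=2$, where the Hessian is constant, while keeping the value-comparison argument above — which uses only boundedness of $\hat l$, not of its Hessian — to obtain convergence of the minimizer to $\B{1}_N/2$ for every even $\alpha$; or (ii) reading ``becomes convex'' asymptotically, noting that $\gamma\Phi$ is convex for all $\gamma<0$ so the penalty can only reduce non-convexity, and deriving uniqueness near $\B{1}_N/2$ from the strict (if degenerate) maximality of $\Phi$ there. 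A secondary gap worth flagging is that genuine convexity needs a bound on $\nabla^2\hat l$, whereas the hypothesis assumes only that $\hat l$ is bounded; the value-comparison route avoids this and is the version I would ultimately write up.
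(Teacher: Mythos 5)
Your proof is correct and in places more careful than the paper's own argument, but it follows a genuinely different route. The paper decomposes the claim into three lemmas (the shape of the scalar penalty $\phi(p)=1-(2p-1)^{\alpha}$, the behaviour of $\gamma\Phi$ in the two limits, and the concavity/convexity of $\gamma\Phi$) and then concludes both limits by asserting that the penalty term ``dominates'' $\hat{l}$, without quantifying what domination means. You instead run a quantitative exterior-penalty argument: comparing $\hat{r}(\B{p}^{\ast}_{\gamma})$ against the value at an optimal vertex gives $\Phi(\B{p}^{\ast}_{\gamma})\le (M+B)/\gamma\to 0$, and the companion bound $\hat{l}(\B{p}^{\ast}_{\gamma})\le M$ plus compactness identifies every subsequential limit as a discrete optimum; the mirrored comparison at $\B{1}_{N}/2$ gives $N-\Phi(\B{p}^{\ast}_{\gamma})\le 2B/|\gamma|\to 0$ and hence convergence to $\B{1}_{N}/2$. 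This buys an actual convergence statement from nothing more than boundedness of $\hat{l}$, which is exactly the hypothesis of the theorem, whereas the paper's dominance argument leaves the limiting step implicit. Your flag on the convexity clause is also well taken: the paper's Lemma on convexity computes the second derivative of $\phi$ as the constant $-4\gamma\alpha$, which is only correct for $\alpha=2$; for even $\alpha>2$ one has $\phi''(p)=-4\alpha(\alpha-1)(2p-1)^{\alpha-2}$, which vanishes at $p=1/2$, so no finite $\gamma<0$ can convexify $\hat{r}$ at the very point the minimizer is supposed to converge to. Your proposed fixes --- restricting the strong-convexity claim to $\alpha=2$ (where it does hold once $|\gamma|$ exceeds a curvature bound on $\hat{l}$, a bound not implied by mere boundedness of $\hat{l}$) while proving convergence of the minimizer for all even $\alpha$ via the value comparison --- are the right way to state the result precisely.
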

For the detailed proof, refer to Appendix \ref{sec:proof-theorem}. 
Theorem \ref{theorem:gamma-annealing-limit} can be generalized for any convex function $\Phi(\B{p}; C)$ that has a unique maximum at  $\nicefrac{\B{1}_{N}}{2}$ and achieves a global minimum for all $\B{p} \in \{0, 1\}^{N}$; an example is binary cross entropy $\Phi_{\mathrm{CE}}(\B{p}) = \sum_{i=1}^{N} (p_{i} \log p_{i} + (1-p_{i})\log (1-p_{i}))$, introduced by \citet{sun2022annealed, sanokowski2024variational} for the UL-based solvers (Type I).
Additionally, the penalty term eliminates the stationary point $\B{p}^{\ast} = \B{0}_{N}$ described in Section \ref{sec:how-can-it-fail}, preventing convergence to a plateau.
For UL-based solvers, the penalty term is expressed as follows: 
\begin{equation}
\label{eq:penalty-cost-gnn}
    \hat{r}(\B{\theta}; C, \B{\lambda}, \gamma) = \hat{l}(\B{\theta}; C, \B{\lambda}) + \gamma \Phi(\B{\theta};C),
\end{equation}
where $\Phi(\B{\theta}; C) \delequal \Phi(\B{p}_{\theta}(C))$.
According to Theorem~\ref{theorem:gamma-annealing-limit}, setting a sufficiently large $\gamma$ value cases the relaxed variables to approach nearly discrete values.
We can also generalize this penalty term $\Phi(\B{\theta}; C)$, to Potts variables optimization, including coloring problems \citep{schuetz2022graph}, and mixed-integer optimization; refer to Appendix \ref{subsec:app-generalize-potts}.

\paragraph{Annealing penalty term}
\begin{wrapfigure}{r}{0.42\textwidth} 
    \vspace{-20pt}
  \centering
  \includegraphics[width=0.41\textwidth]{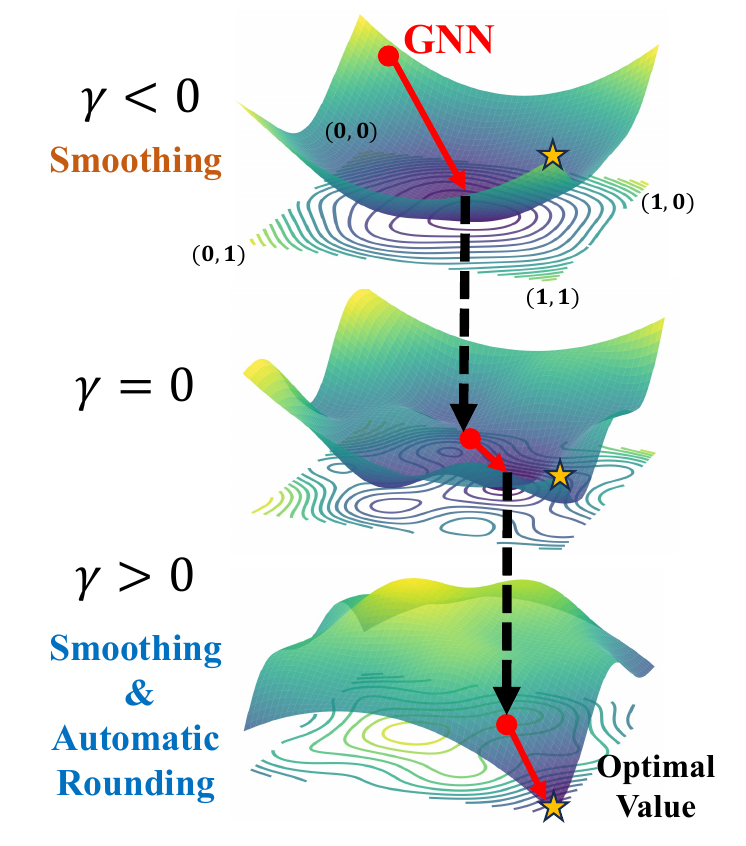}
  \caption{Annealing strategy. 
  When $\gamma < 0$, it facilitates exploration by reducing the non-convexity of the objective function. As $\gamma$ increases, it promotes optimal discrete solutions by smoothing away suboptimal continuous ones.}
  \label{fig:image-annealing-rounding}
  \vspace{-5pt}
\end{wrapfigure}
We propose an annealing strategy that gradually anneals the penalty parameter $\gamma$ in Eq.~\eqref{eq:penalty-cost-gnn}. 
Initially, a negative gamma value, i.e., $\gamma < 0$, is chosen to leverage the properties, facilitating broad exploration  by smoothing the non-convexity of $\hat{l}(\B{\theta}; C, \B{\lambda})$ and eliminating the stationary point $\B{p}^{\ast}=\B{0}_{N}$ to avoid the plateau, as discussed in Section \ref{sec:how-can-it-fail}.
Subsequently, the penalty parameter $\gamma$ is gradually increased to a positive value, $\gamma > 0$, with each update of the trainable parameters (one epoch), until the penalty term approaches zero, i.e., $\Phi(\B{\theta}, C) \approx 0$, to automatically round the relaxed variables by smoothing out suboptimal continuous solutions oscillating between $1$ or $0$. 
A conceptual diagram of this annealing process is shown in Fig.~\ref{fig:image-annealing-rounding}.

Note that employing the binary cross-entropy 
$\Phi_{\mathrm{CE}}(\B{p})$ is infeasible for UL-based solvers when $\gamma > 0$, as the gradient $\nicefrac{\partial \Phi_{\mathrm{CE}}(\B{p})}{\partial p_{i}}$ diverges to $\pm \infty$ at $0$ or $1$. 
In deed, when $\gamma=0$, most relaxed variables typically approach binary values, with a relatively small number of variables oscillating between $0$ and $1$. 
This gradient divergence issue in  $\Phi_{\mathrm{CE}}(\B{p})$ makes the learning infeasible without additional techniques, such as gradient clipping.
In contrast, the gradient of the penalty term in Eq.~\ref{eq:penalty-cost}, $\nicefrac{\partial \Phi(\B{p})}{\partial p_{i}}$, is bounded within $[-2\alpha, 2\alpha]$ for any $\gamma$, preventing the gradient divergence issue seen in  $\Phi_{\mathrm{CE}}(\B{p})$. 
Additionally, by increasing $\alpha$, the absolute value of the gradient near $\nicefrac{1}{2}$ becomes smaller, allowing for control over the smoothing strength toward a discrete solution near $\nicefrac{1}{2}$.

We also propose an early stopping strategy that monitors both the loss function and the penalty term, halting the annealing and learning processes when the penalty term approaches zero, i.e., $\Phi(\B{\theta}; C) \approx 0$. 
Various annealing schedules can be considered; in this study, we employ the following scheduling: $\gamma(\tau+1) \leftarrow \gamma(\tau) + \varepsilon$, where the scheduling rate $\varepsilon \in \mab{R}_{+}$ is a small constant, and $\tau$ denotes the update iterations of the trainable parameters. 
We refer to the PI-GNN solver with this continuous relaxation annealing as CRA-PI-GNN solver. 
Here, two additional hyperparameters are introduced: the initial scheduling value $\gamma(0)$ and the scheduling rate $\varepsilon$. Numerical experiments suggest that better solutions are obtained when $\gamma(0)$ is set to a small negative value and $\varepsilon$ is kept low. 
The ablation study are presented in Appendix \ref{subsec:ablation-study}.

\section{Related Work}\label{sec:related-work}
    Previous works on UL-based solvers have addressed various problems, such as MaxCut problems \citep{yao2019experimental} and traveling salesman problems \citep{hudson2021graph}, using carefully tailored problem-specific objectives.
    Some studies have also explored constraint satisfaction problems \citep{amizadeh2018learning, toenshoff2019run}, but applying these approaches to broader CO problems often requires problem-specific reductions.   \citet{karalias2020erdos} proposed Erd\H{o}s Goes Neural (EGN) solver, 
    an UL-based solver for general CO problems based on Erd\H{o}s' probabilistic method.
    This solver generate solutions through an inference process using training instances.
    Subsequently, \citet{wang2022unsupervised} proposed an entry-wise concave continuous relaxation, broadening the EGN solver to a wide range of CO problems.
    In contrast, \citet{schuetz2022combinatorial, schuetz2022graph} proposed PI-GNN solver, an UL-based solver for a single CO problems that automatically learns problem-specific heuristics during the training process. 
    However, \citet{angelini2023modern, boettcher2023inability} pointed out the optimization difficulties where PI-GNN solver failed to achieve results comparable to those of greedy algorithms. 
    \citet{wang2023unsupervised} also claimed optimization issues with PI-GNN solver, emphasizing the importance of learning from training data and history to overcome local optima.
    They then proposed Meta-EGN solvers, a meta-learning approach that updates NN network parameters for individual CO problem instances.
    Furthermore, to address these optimization issue, \citet{lin2023continuation, sun2022annealed, sanokowski2024variational} proposed annealing strategy similar to simulated annealing \citep{kirkpatrick1983optimization}.

\section{Experiments}\label{sec:experiment}
We begin by evaluating the performance of CRA-PI-GNN solver on the MIS and the MaxCut benchmark problems across multiple graphs of varying sizes, demonstrating that CRA effectively overcomes optimization challenges without relying on data/history $\mac{D}$. 
We then extend the evaluation to the DBM problems, showing the applicability to more practical CO problems. 
For the objective functions and the detailed explanations, refer to Appendix \ref{sec:theoretical-background-co-problem}.

\subsection{Experimental settings}\label{sub-sec:config}
\paragraph{Baseline methods}
In all experiments, the baseline methods include the PI-GNN solver \citep{schuetz2022combinatorial} as the direct baseline of a UL-based solver for a single instance.
For the MIS problems, we also consider the random greedy algorithm (RGA) and DGA \citep{angelini2019monte} as heuristic baselines. 
For the MaxCut problems, RUN-CSP solver \citep{toenshoff2019run} is considered as an additional baseline, and a standard greedy algorithm and SDP based approximation algorithm \citep{goemans1995improved} are considered as an additional classical baseline. 
The parameters for the Goemans-Williamson (GW) approximation are all set according to the settings in \citet{schuetz2022graph}. The implementation used the open-source CVXOPT solver with CVXPY \citep{cvx_graph_algorithms} as the modeling interface. 
Note that we do not consider UL-based solvers for learning generalized heuristics \citep{karalias2020erdos, wang2022unsupervised, wang2023unsupervised}, which rely on training instances $\mac{D}=\{C_{\mu}\}_{\mu=1}^{P}$.
The primary objective of this study is to evaluate whether CRA-PI-GNN solver can surpass the performance of both PI-GNN solver and greedy algorithms.
However, for the MIS problem, EGN solver \citep{karalias2020erdos} and Meta-EGN solver \citep{wang2023unsupervised} are considered to confirm that CRA can overcome the optimization issues without training instances.

\paragraph{Implementation}
The objective of the numerical experiments is to compare the CRA-PI-GNN solver with the PI-GNN solver. 
Thus, we follow the same experimental configuration described as the experiments in  \citet{schuetz2022combinatorial}, employing a simple two-layer \texttt{GCV} and \texttt{GraphSAGE} \citep{hamilton2017inductive} implemented by the Deep Graph Library \citep{wang2019deep}; Refer to Appendix \ref{subsec:app-architecture-gnns} for the detailed architectures.

    We use the AdamW \citep{kingma2014adam} optimizer with a learning rate of $\eta =10^{-4}$ and weight decay of $10^{-2}$.
    The GNNs are trained for up to $5\times 10^{4}$ epochs with early stopping, which monitors the summarized loss function $\sum_{s=1}^{S} \hat{l}(P_{:, s})$ and the entropy term $\Phi(P; \gamma, \alpha)$ with tolerance $10^{-5}$ and patience $10^{3}$ epochs; Further details are provided in Appendix \ref{subsec:app-learning-setting}.
    We set the initial scheduling value to $\gamma(0)=-20$ for the MIS and matching problems, and we set $\gamma(0)=-6$ for the MaxCut problems with the scheduling rate $\varepsilon=10^{-3}$ and curve rate $\alpha=2$ in Eq.~\eqref{eq:penalty-cost-gnn}.
    These values are not necessarily optimal, and refining these parameters can lead to better solutions; Refer to Appendix \ref{subsec:ablation-study} and Appendix \ref{subsec:ablation-curve-rate} for an ablation study of these parameters.
    Once the training process is complete, we apply projection heuristics to map the obtained soft solutions back to discrete solutions using simple projection, where for all $i \in [N]$, we map $p_{\theta, i}$ into $0$ if $p_{\theta, i} \le 0.5$ and $p_{\theta, i}$ into $1$ if $p_{\theta, i} > 0.5$. 
    However, due to the early stopping in Section~\ref{subsec:continuou-relaxation-annealing}, the CRA-PI-GNN solver ensures that for all benchmark CO problems, the soft solution at the end of the training process became 0 or 1 within the 32-bit Floating Point range in Pytorch GPU; thus, it is robust against a given threshold, which we set to $0.5$ in our experiments.
    Additionally, no violations of the constraints were observed in our numerical experiments.
    Thus, following results presented in are feasible solutions. 
\paragraph{Evaluation metrics}
    Following \citet{karalias2020erdos},
    We use the approximation ratio (ApR), formulated as  $\mathrm{ApR}=f(\B{x}; C)/f(\B{x}^{\ast}; C)$, where $\B{x}^{\ast}$ is optimal solution.
    For the MIS problems, we evaluate the ApRs using  the theoretical optimal cost   \citep{barbier2013hard} and the independent set density $\rho$ relative to the theoretical results.
    For the MaxCut problems on RRGs, we adopt the cut ratio $\nu$ against the theoretical upper bound \citep{parisi1980sequence, dembo2017extremal}; see Appendix \ref{sec:theoretical-background-co-problem} for the details.  
    All the results for the MIS and MaxCut problems are summarized based on 5 RRGs with different random seeds. 
    In the case of the MaxCut Gset problem, the ApR is calculated compared to the known best cost functions.
    Regarding the DBM problems, we calculate the ApR against the global optimal, identified using Gurobi 10.0.1 solver with default settings.
    
\subsection{MIS problems}\label{subsec:mis-problems}
\paragraph{Degree dependency of solutions using CRA}
\begin{figure}
\begin{minipage}[c]{0.49\textwidth}
    \centering
    \centerline{\includegraphics[width=\columnwidth]{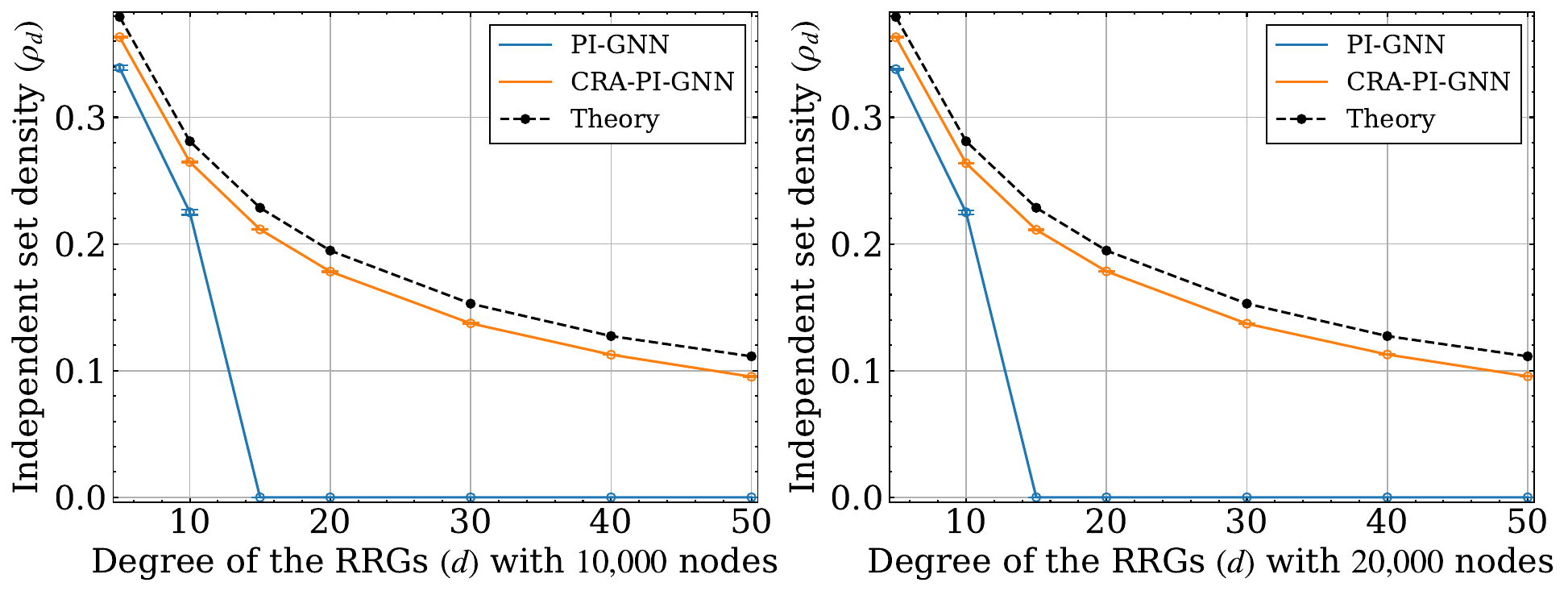}}
    \caption{Independent set density of the MIS problem on $d$-RRG. Results for graphs with $N=10{,}000$ nodes (Left) and $N=20{,}000$ nodes (Right). 
    the dashed lines represent the theoretical results \citep{barbier2013hard}.
    }
\label{fig:mis-degree-result}
\end{minipage}
\hfill
\begin{minipage}[c]{0.49\textwidth}
    \centering
    \centerline{\includegraphics[width=\columnwidth]{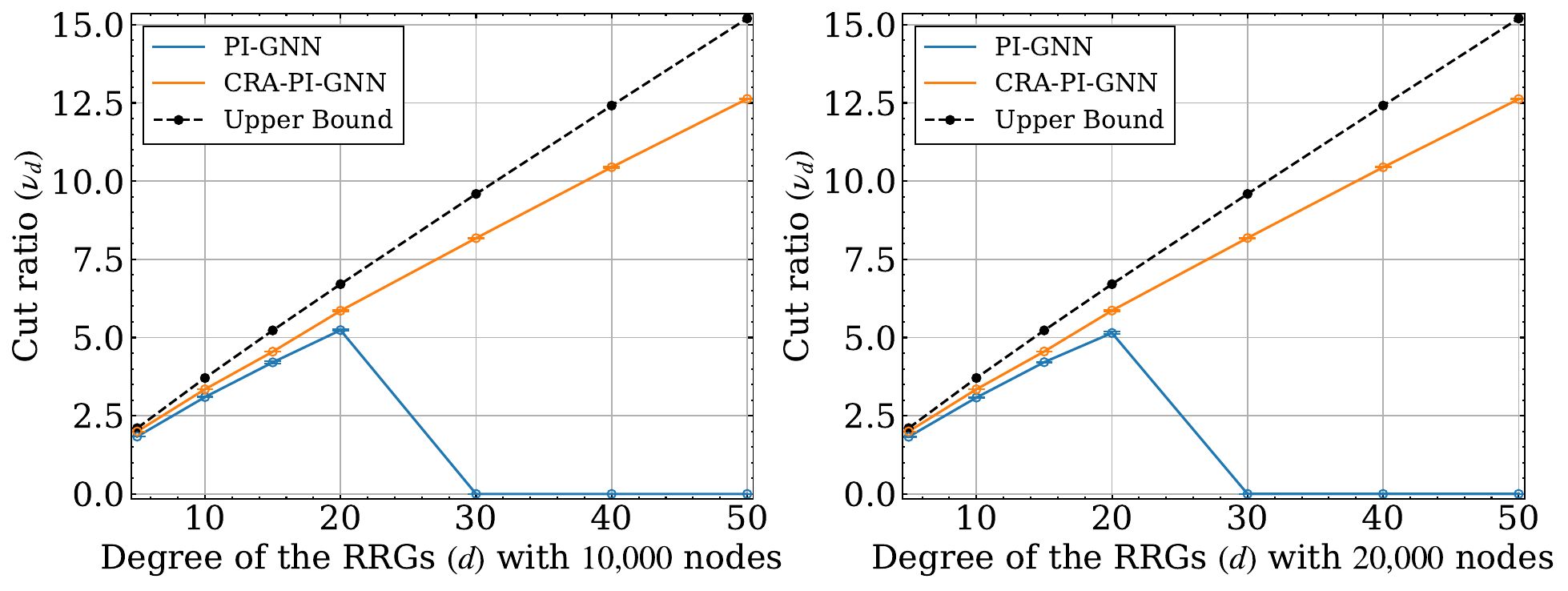}}
    \caption{Cut ratio of the MaxCut problem on $d$-RRG as a function of the degree $d$ Results for $N=10{,}000$ (Left) and $N=20{,}000$ (Right). The dashed lines represents the theoretical upper bounds \citep{parisi1980sequence}.
    }
    \label{fig:maxcut-degree-result}
\end{minipage}
\end{figure}
First, we compare the performance of the PI-GNN and CRA-PI-GNN solvers using GCV, as in \citet{schuetz2022combinatorial}.
Fig.~\ref{fig:mis-degree-result} shows the independent set density $\rho_{d}$ as a function of degree $d$ obtained by the PI-GNN and CRA-PI-GNN solvers compared with the theoretical results \citep{barbier2013hard}. 
Across all degrees $d$, the CRA-PI-GNN solver outperformed the PI-GNN solver and approached the theoretical results, whereas the PI-GNN solver fail to find valid solutions, especially for $d \ge 15$, as pointed by the previous studies \citep{angelini2023modern, wang2023unsupervised}.
\paragraph{Response to \citet{angelini2023modern} and \citet{wang2023unsupervised}}
\begin{table}[tb]
\begin{minipage}[c]{0.55\textwidth}
\centering
\tblcaption{ApR in MIS problems on RRGs with $10{,}000$ nodes and node degree $d=20, 100$. ``CRA'' represents the CRA-PI-GNN solver.}
\begin{tabular}{l|llll}
\toprule
Instance & Method & ApR \\
\hline
\multirow{6}{*}{20-RRG} & RGA & $0.776 \pm 0.001$ \\
 & DGA & $0.891\pm0.001$ \\
 & EGN & $0.775$ (\citeyear{wang2023unsupervised}) \\
 & META-EGN & $0.887$ (\citeyear{wang2023unsupervised}) \\
 & PI-GNN (GCV) & $0.000\pm0.000$ \\
 & PI-GNN (SAGE) & $0.745\pm0.003$ \\
 & \textbf{CRA (GCV)} & $0.937\pm0.002$ \\
 & \textbf{CRA (SAGE)} & $\underline{\mathbf{0.963\pm0.001}}$ \\ \hline
 \multirow{6}{*}{100-RRG} & RGA & $0.663 \pm 0.001$ \\
 & DGA & $0.848 \pm 0.002$ \\
 & PI-GNN (GCV) & $0.000\pm0.000$ \\
 & PI-GNN (SAGE) & $0.000\pm0.000$ \\
 & \textbf{CRA (GCV)} & $0.855 \pm 0.004$ \\
 & \textbf{CRA (SAGE)} & $\underline{\mathbf{0.924\pm 0.001}}$  \\ \hline
\end{tabular}
\label{tab:mis-high-degree}
\end{minipage}
\hfill
\begin{minipage}[c]{0.45\textwidth}
\includegraphics[width=0.99\columnwidth]{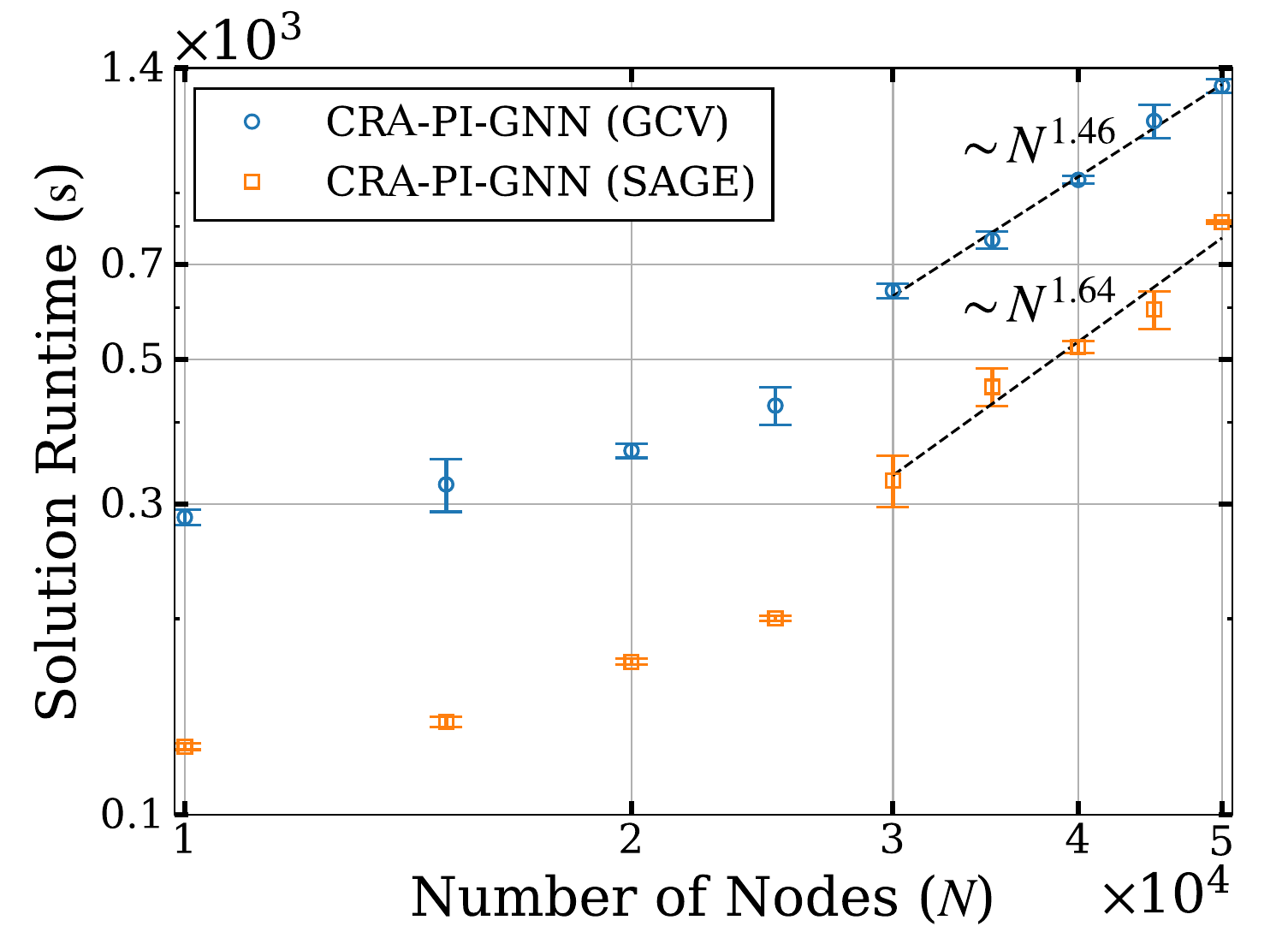}
  \figcaption{(Right) computational runtime (in seconds) of the CRA-PI-GNN solvers with the GraphSage and Conv architectures on $100$-RRG with varying numbers of nodes $N$.Error bars represent the standard deviations of the results.}
\label{fig:computational-scaling}
\end{minipage}
\end{table}

MIS problems on RRGs with a degree $d$ larger than $16$ is known to be hard problems \citep{barbier2013hard}. 
As discussed in Section \ref{sec:how-can-it-fail}, \citet{angelini2023modern, wang2023unsupervised} have posted the optimization concerns on UL-based solvers. 
However, we call these claim into question by substantially outperforming heuristics DGA and RGA for the MIS on graphs with $d=20, 100$, without training/historical instances $\mac{D}=\{G^{\mu}\}_{\mu=1}^{n}$, as shown in Table \ref{tab:mis-high-degree}. 
See Appendix \ref{tab:app-maxcut-gset} for the results of solving all other Gsets, where consistently, CRA-PI-GNN provides better results as well.
A comparison of the sampling-based solvers, RL-based solvers, SL-based solvers, Gurobi, and MIS-specific solvers is presented in Appendix \ref{subsec:app-additional-results-mis}.

\paragraph{Acceleration of learning speed}

We also compared the learning curve between PI-GNN and CRA-PI-GNN solver to confirmed that the CRA-PI-GNN solver does not become trapped in the plateau, $\B{p}_{N}=\B{0}_{N}$, as discussed in Section~\ref{sec:how-can-it-fail}. Fig.~\ref{fig:costdynamics-results} shows the dynamics of the cost functions for the MIS problems with $N=10{,}000$ across $d=3, 5, 20, 100$. 
Across all degrees, CRA-PI-GNN solver achieves a better solution with fewer epochs than PI-GNN solver.
Specifically, PI-GNN solver becomes significantly slower due to getting trapped in the plateau even for graphs with low degrees, such as $d=3, 5$. In contrast, CRA-PI-GNN solver can effectively escape from plateaus through the smoothing and automatic rounding of the penalty term when the negative parameter $\gamma > 0$.
\paragraph{Computational scaling}

\begin{wrapfigure}{r}{0.5\textwidth} 
    \centering
    \vspace{-15pt}
    \includegraphics[width=0.5\textwidth]{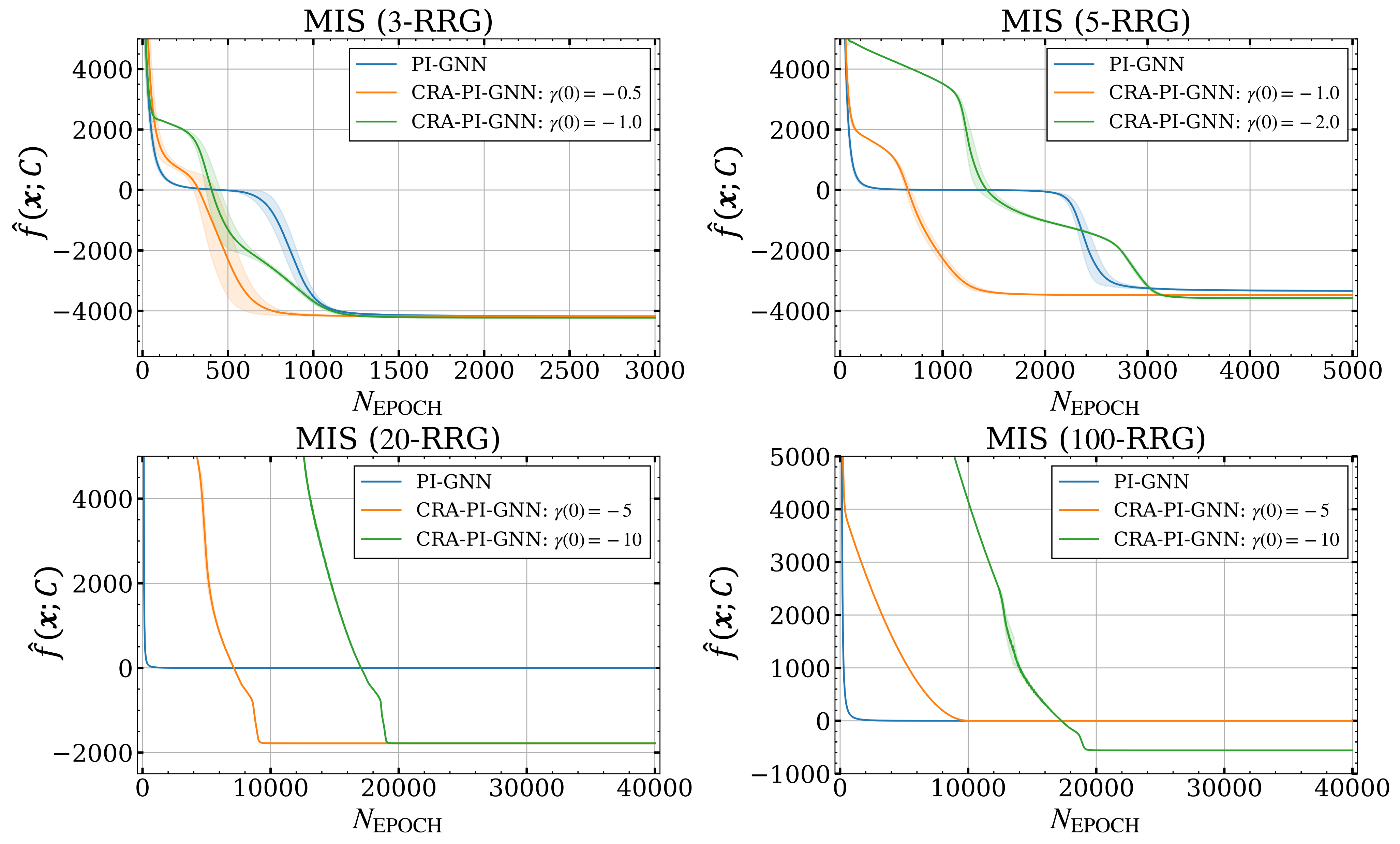}
    \caption{The dynamics of cost function for MIS problems on RRGs with  $N=10{,}000$ nodes varying degrees $d$ as a function of the number of parameters updates $N_{\mathrm{EPOCH}}$. 
    }
    \label{fig:costdynamics-results}
    \vspace{-5pt}
\end{wrapfigure}
We next evaluate the computational scaling of the CRA-PI-GNN solver for MIS problems with large-scale RRGs with a node degree of $100$ in Fig.~\ref{fig:computational-scaling}, following previous studies \citep{schuetz2022combinatorial, wang2023unsupervised}.
Fig.~\ref{fig:computational-scaling} demonstrated a moderate super-linear scaling of the total computational time, approximately $\sim N^{1.4}$ for GCN and $\sim N^{1.7}$ for GraphSage. 
This performance is nearly identical to that of the PI-GNN solver \citep{schuetz2022combinatorial} for problems on RRGs with lower degrees.
It is important note that the runtimes of CRA-PI-GNN solver heavily depend on the  optimizer for GNNs and annealing rate $\varepsilon$; thus this scaling remains largely unchanged for problems other than the MIS on $100$ RRG.
Additionally, CRA demonstrate that the runtime remains nearly constant as graph order and density increase, indicating effective scalability with denser graphs which is presented in Appendix \ref{subsec:app-additional-results-mis}.

\subsection{MaxCut problem}
\paragraph{Degree dependency of solutions}
We first compare the performances of PI-GNN and CRA-PI-GNN solvers with GCV, following \citet{schuetz2022combinatorial}.
Fig.~\ref{fig:maxcut-degree-result} shows the cut ratio $\nu_{d}$ as a function of the degree $d$ compared to the theoretical upper bound \citep{parisi1980sequence, dembo2017extremal}. 
Across all degrees $d$, CRA-PI-GNN solver also outperforms PI-GNN solver, approaching the theoretical upper bound. In contrast, PI-GNN solver fails to find valid solutions for $d>20$ as with the case of the MIS problems in Section \ref{subsec:mis-problems}.

\paragraph{Standard MaxCut benchmark test}
\begin{wrapfigure}{r}{0.41\textwidth} 
      \centering
      \includegraphics[width=0.41\textwidth]{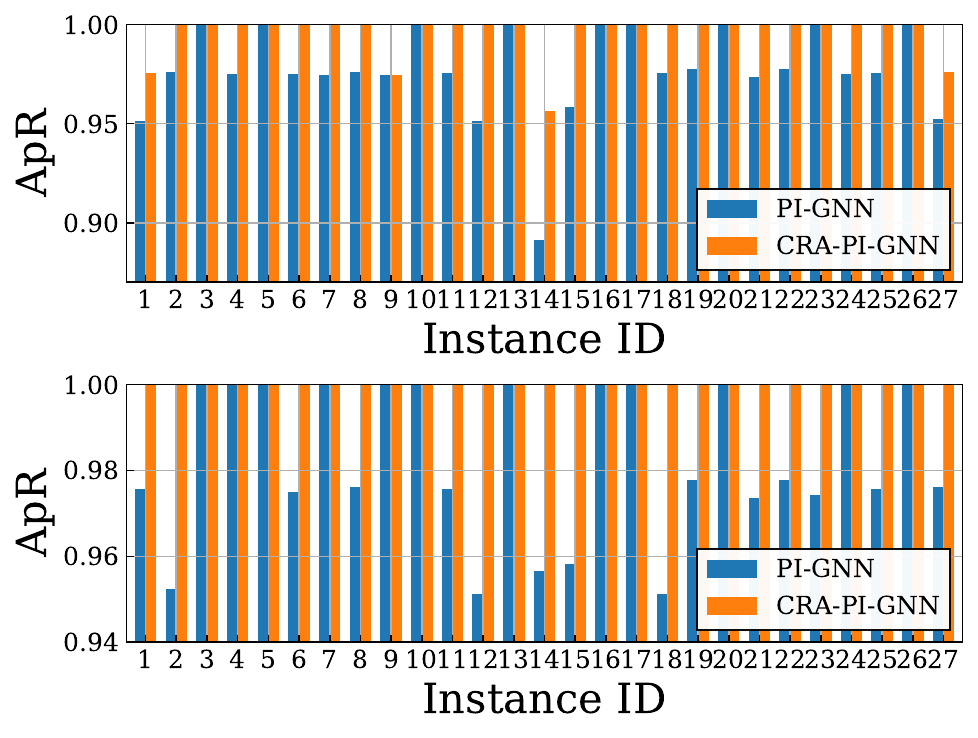}
      \caption{ApR on DBM  problems.}
      \label{fig:matching_result}
    \end{wrapfigure}
Following  \citet{schuetz2022combinatorial}, 
we next conducted additional experiments on standard MaxCut benchmark instances based on the publicly available Gset dataset \citep{gsetdataset}, which is commonly used to evaluate MaxCut algorithms. 
Here, we provide benchmark results for seven distinct graphs with thousands of nodes, including Erd\"{o}s-Renyi graphs with uniform edge probability, graphs in which the connectivity decays gradually from node $1$ to $N$, $4$-regular toroidal graphs, and a very large Gset instance with $N=10{,}000$ nodes. 
Table~\ref{tab:maxcut-gset} shows, across all problems, CRA-PI-GNN solver outperforms both the PI-GNN, RUN-CSP solvers and other greedy algorithm.
See Appendix \ref{tab:app-maxcut-gset} for the results of solving all other Gsets, where consistently, CRA-PI-GNN provides better results as well.
\begin{table}[tb]
    \caption{ApR for MaxCut on Gset}
    \label{tab:maxcut-gset}
        \centering
        \begin{tabular}{lccccccc}
                \toprule
                GRAPH & (NODES, EDGES) & GREEDY & SDP & RUN-CSP & PI-GNN & \textbf{CRA}  \\
                \midrule
                G14 & ($800$, $4{,}694$) & $0.946$ & $0.970$ &$0.960$ & $0.988$ & $\underline{\mathbf{0.994}}$  \\
                G15 & ($800$, $4{,}661$) & $0.939$ & $0.958$ &$0.960$ & $0.980$  & $\underline{\mathbf{0.992}}$  \\
                G22 & ($2{,}000$, $19{,}990$) & $0.923$ & $0.77$ & $0.975$ & $0.987$ & $\underline{\mathbf{0.998}}$   \\
                G49 & ($3{,}000$, $6{,}000$) & $\underline{\mathbf{1.000}}$& $\underline{\mathbf{1,00}}$ & $\underline{\mathbf{1.000}}$ &  $0.986$ & $\mathbf{\underline{1.000}}$  \\
                G50 & ($3{,}000$, $6{,}000$) & $\mathbf{\underline{1.000}}$ & $\mathbf{\underline{1.000}}$ & $\mathbf{\underline{1.000}}$ & $0.990$  & $\mathbf{\underline{1.000}}$  \\
                G55 & ($5{,}000$, $12{,}468$) &$0.892$ & $-$ & $0.982$ & $0.983$ & $\underline{\mathbf{0.991}}$  \\
                G70 & ($10{,}000$, $9{,}999$) & $0.886$ & $-$ & $0.970$ & $0.982$ & $\underline{\mathbf{0.992}}$  \\
                \bottomrule
            \end{tabular}
\end{table}

\subsection{Diverse bipartite matching}\label{subsec:diverse-bipartite-matching}
    To evaluate the applicability of the CRA-PI-GNN solver to more practical problems not on graphs, we conducted experiments on DBM problems \citep{ferber2020mipaal, mulamba2020contrastive, mandi2022decision}; refer to Appendix \ref{sec:theoretical-background-co-problem} for details. 
    This problems consists of $27$ distinct instances with varying properties, and each instance comprises $100$ nodes representing scientific publications, divided into two groups of $50$ nodes $N_{1}$ and $N_{2}$. 
    The optimization is formulated as follows:
    \begin{multline*}
    \label{eq:matching-cost-function}
        l(\B{x}; C, M, \B{\lambda}) = - \textstyle\sum_{ij} C_{ij} x_{ij} + \lambda_{1} \textstyle\sum_{i}\mathrm{ReLU}\Big(\textstyle\sum_{j} x_{ij}-1 \Big) + \lambda_{2} \textstyle\sum_{j}\mathrm{ReLU}\Big(\sum_{i} x_{ij} - 1\Big) \\
        + \lambda_{3} \mathrm{ReLU}\Big(p\textstyle\sum_{ij} x_{ij} - \textstyle\sum_{ij} M_{ij} x_{ij} \Big) + \lambda_{4} \mathrm{ReLU}\Big(q\sum_{ij} x_{ij} - \sum_{ij} (1-M_{ij}) x_{ij} \Big),
    \end{multline*}
    where $C \in \mab{R}^{N_{1} \times N_{2}}$ represents the likelihood of a link between each pair of nodes, an indicator $M_{ij}$ is set to $0$ if article $i$ and $j$ share the same subject field ($1$ otherwise) $\forall i \in N_{1}$, and $j \in N_{2}$. The parameters $p, q \in [0, 1]$ represent the probability of pairs sharing their field and of unrelated pairs, respectively. 
    As in \citet{mandi2022decision}, we explore two variations of this problem, with $p=q=$ being $25$\% and $5$\%, respectively, and these variations are referred to as Matching-1 and Matching-2, respectively. In this experiment, we set $\lambda_{1}=\lambda_{2}=10$ and $\lambda_{3}=\lambda_{4}=25$.
    Fig~\ref{fig:matching_result} shows that the CRA-PI-GNN solver can find better solutions across all instances.

\section{Conclusion}\label{sec:conclusion}
    This study proposes CRA strategy to address the both optimization and rounding issue in UL-based solvers. 
    CRA strategy introduces a penalty term that dynamically shifts from prioritizing continuous solutions, where the non-convexity of the objective function is effectively smoothed, to enforcing discreteness, thereby eliminating artificial rounding.
    Experimental results demonstrate that CRA-PI-GNN solver significantly outperforms both PI-GNN solver and greedy algorithms across various complex CO problems, including MIS, MaxCut, and DBM problems. 
    CRA approach not only enhances solution quality but also accelerates the learning process.

    \paragraph{Limitation}
    In this numerical experiments, most hyperparameters were fixed to their default values, as outlined in  Section \ref{sub-sec:config}, with minimal tuning. However, tuning may be necessary for certain problems or to further enhance performance.

\bibliographystyle{unsrtnat}
\bibliography{ref}

\newpage
\appendix

\section{Overview}
This supplementary material provides extended explanations, implementation details, and additional results.

\section{Derivation}\label{sec:derivaiton}

\subsection{Proof of Theorem \ref{theorem:gamma-annealing-limit}}\label{sec:proof-theorem}

First, we present three lemmas, and then we demonstrate Theorem \ref{theorem:gamma-annealing-limit} based on these lemmas.
\begin{lemma}
\label{lemma:app-component-lemma}
For any even natural number $\alpha \in \{2n \mid n \in \mab{N}_{+}\}$, the function $\phi(p) = 1 - (2p-1)^{\alpha}$ defined on $[0, 1]$ achieves its maximum value of $1$ when $p=1/2$ and its minimum value of $0$ when $p=0$ or $p=1$.
\end{lemma}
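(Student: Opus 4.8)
The plan is to reduce the statement to an elementary fact about even powers on a symmetric interval via a change of variables. First I would substitute $u = 2p - 1$, so that as $p$ ranges over $[0,1]$ the new variable $u$ ranges over $[-1,1]$, and the function becomes $\psi(u) = 1 - u^{\alpha}$. This substitution is a bijection between $[0,1]$ and $[-1,1]$ that sends $p = 1/2$ to $u = 0$ and the endpoints $p \in \{0,1\}$ to $u \in \{-1,1\}$, so it suffices to locate the extrema of $\psi$ on $[-1,1]$.

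The key step is to bound $u^{\alpha}$ on $[-1,1]$ from both sides, using the hypothesis that $\alpha$ is even. Writing $\alpha = 2n$ with $n \in \mab{N}_{+}$, we have $u^{\alpha} = (u^{2})^{n} = |u|^{\alpha} \ge 0$, with equality if and only if $u = 0$; this yields the upper bound $\psi(u) = 1 - u^{\alpha} \le 1$, attained exactly at $u = 0$. For the reverse direction, since $|u| \le 1$ on $[-1,1]$ and $t \mapsto t^{\alpha}$ is strictly increasing on $[0,1]$, we get $u^{\alpha} = |u|^{\alpha} \le 1$, with equality if and only if $|u| = 1$, i.e. $u = \pm 1$; this yields the lower bound $\psi(u) \ge 0$, attained exactly at $u = \pm 1$.

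Translating back through $u = 2p - 1$ then gives the claim directly: the maximum value $1$ is attained uniquely at $p = 1/2$, and the minimum value $0$ is attained exactly at $p \in \{0,1\}$. There is essentially no serious obstacle here, as the argument rests only on the nonnegativity and the monotonicity of even powers; the single point requiring care is the explicit use of the parity of $\alpha$, since for odd $\alpha$ the term $(2p-1)^{\alpha}$ would not be sign-definite and both the location of the maximum and the endpoint values would change. As a cross-check I would also note the direct derivative computation $\phi'(p) = -2\alpha (2p-1)^{\alpha-1}$, which vanishes only at $p = 1/2$; since $\alpha - 1$ is odd, the factor $(2p-1)^{\alpha-1}$ flips sign there, so $\phi$ increases on $[0,1/2)$ and decreases on $(1/2,1]$, making the interior critical point the global maximum and forcing the minima to the endpoints, where $\phi(0) = \phi(1) = 0$.
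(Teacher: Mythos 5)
Your proof is correct, but it takes a different route from the paper's. The paper argues via calculus: it computes the first derivative, finds the critical point $p=1/2$, invokes the second derivative test, and then uses concavity plus the symmetry $\phi(p)=\phi(1-p)$ to push the minima to the endpoints. You instead substitute $u=2p-1$ and bound $u^{\alpha}$ directly by $0\le u^{\alpha}=|u|^{\alpha}\le 1$ on $[-1,1]$, reading off both extrema and their locations at once. Your approach is arguably the safer one here: the paper's displayed derivative $-2\alpha(2p-1)$ and second derivative $-4\alpha$ are only correct for $\alpha=2$ (in general $\phi'(p)=-2\alpha(2p-1)^{\alpha-1}$ and $\phi''(p)=-4\alpha(\alpha-1)(2p-1)^{\alpha-2}$), and for $\alpha\ge 4$ the second derivative vanishes at $p=1/2$, so the second derivative test alone would not certify a maximum there. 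Your elementary bound sidesteps both issues, and your closing derivative cross-check uses the correct formula $\phi'(p)=-2\alpha(2p-1)^{\alpha-1}$ together with the sign change of the odd power, which is the monotonicity argument the paper's calculus route actually needs. What the paper's approach buys is a statement of concavity of $\phi$, which it reuses in the subsequent convexity lemma; your argument does not establish concavity, but the lemma as stated does not require it.
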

\begin{proof}
The derivative of $\phi(p)$ relative to $p$ is $\nicefrac{d\phi(p)}{dp} = -2\alpha(2p-1)$, which is zero when $p=1/2$.
This is a point where the function is maximized because the second derivative $\nicefrac{d^{2}\phi(p)}{dp^{2}} = -4\alpha \le 0$.
In addition, this function is concave and symmetric relative to $p=1/2$ because $\alpha$ is an even natural number, i.e., $\phi(p)=\phi(1-p)$, thereby achieving its minimum value of $0$ when $p=0$ or $p=1$.
\end{proof}

\begin{lemma}
\label{lemma:sum-penalty-enforcing}
For any even natural number $\alpha \in \{2n \mid n \in \mab{N}_{+}\}$, if $\gamma \to +\infty$, minimizing the penalty term $\gamma \Phi(\B{p})= \gamma \sum_{i=1}^{N} (1-(2p_{i}-1)^{\alpha}) = \gamma \sum_{i=1}^{N} \phi(p_{i})$ enforces that the for all $i \in [N]$, $p_{i}$ is either $0$ or $1$ and, if $\gamma \to - \infty$, the penalty term enforces $\B{p}=\B{1}_{N}/2$.
\end{lemma}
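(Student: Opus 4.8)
The plan is to exploit the separability of $\Phi$ and reduce the $N$-dimensional minimization to $N$ independent scalar problems, then invoke Lemma~\ref{lemma:app-component-lemma} coordinate-wise. Since $\Phi(\B{p}) = \sum_{i=1}^{N} \phi(p_i)$ with $\phi(p) = 1 - (2p-1)^{\alpha}$, and the feasible box $[0,1]^{N}$ is a product of intervals, the minimization $\min_{\B{p} \in [0,1]^{N}} \gamma\Phi(\B{p})$ splits across coordinates: the argmin is the Cartesian product of the coordinatewise argmins of $\gamma\phi(p_i)$. Continuity of $\phi$ on the compact interval $[0,1]$ guarantees the minimum is attained, so it suffices to analyze the scalar problem $\min_{p \in [0,1]} \gamma\phi(p)$ for each sign of $\gamma$.

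First I would treat $\gamma \to +\infty$, equivalently any fixed $\gamma > 0$. Multiplying by a positive constant does not change the argmin, so $\mathrm{argmin}_{p}\, \gamma\phi(p) = \mathrm{argmin}_{p}\, \phi(p)$. By Lemma~\ref{lemma:app-component-lemma}, $\phi$ attains its minimum value $0$, and since $\alpha$ is even we have $(2p-1)^{\alpha} = 1$ on $[0,1]$ exactly when $|2p-1| = 1$, i.e. only at $p \in \{0,1\}$. Hence every coordinate of a minimizer is forced to $0$ or $1$, giving $\B{p} \in \{0,1\}^{N}$.

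Next I would handle $\gamma \to -\infty$, equivalently any fixed $\gamma < 0$. Multiplying by the negative constant flips the problem: $\mathrm{argmin}_{p}\, \gamma\phi(p) = \mathrm{argmax}_{p}\, \phi(p)$. Lemma~\ref{lemma:app-component-lemma} gives that $\phi$ attains its maximum value $1$ at $p = 1/2$; uniqueness follows because $\alpha$ is even, so $(2p-1)^{\alpha} \ge 0$ with equality only at $p = 1/2$, whence $\phi(p) = 1$ iff $p = 1/2$. Each coordinate is therefore driven to the unique point $1/2$, so $\B{p} = \B{1}_{N}/2$ is the unique minimizer.

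The point to handle carefully, rather than a genuine obstacle, is the role of the limit $\gamma \to \pm\infty$ itself. For the isolated penalty term the argmin depends only on the sign of $\gamma$, not its magnitude, so the limiting statement holds trivially once the sign is fixed. The limit becomes substantive only when $\gamma\Phi$ is added to the data term $\hat{l}$ as in Theorem~\ref{theorem:gamma-annealing-limit}, where a growing $|\gamma|$ is needed to dominate $\hat{l}$ and pull the combined minimizer toward the penalty's minimizers. I would flag this connection explicitly so the lemma reads coherently as the building block that Theorem~\ref{theorem:gamma-annealing-limit} relies on.
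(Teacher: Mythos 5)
Your proof is correct and follows essentially the same route as the paper's: both reduce the problem coordinate-wise via the separability of $\Phi$ and then invoke Lemma~\ref{lemma:app-component-lemma} to place each $p_i$ at $\{0,1\}$ when $\gamma>0$ and at $1/2$ when $\gamma<0$. Your version is simply more explicit about the sign argument, the uniqueness of the coordinatewise optimizers, and the observation that only the sign of $\gamma$ matters for the isolated penalty term, all of which the paper leaves implicit.
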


\begin{proof}
From Lemma \ref{lemma:app-component-lemma}, as $\gamma \to +\infty$, $\phi(p)$ is minimal value when, for any $i \in [N]$, $p_{i}=0$ or $p_{i}=1$. 
As $\gamma \to -\infty$, $\phi(p; \alpha, \gamma)$ is minimal value when, for any $i \in [N]$, $p_{i}=1/2$.
\end{proof}

\begin{lemma}
\label{lemma:sum-penalty-convex}
For any even number $\alpha\in \{2n \mid n \in \mab{N}_{+}\}$, $\gamma \Phi(\B{p})$ is concave when $\lambda > 0$ and is a convex function when $\lambda <0$.
\end{lemma}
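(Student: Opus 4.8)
The plan is to reduce the multivariate claim to a one-dimensional second-derivative computation, exploiting the fact that $\Phi(\B{p}) = \sum_{i=1}^{N} \phi(p_{i})$ is separable across coordinates, where $\phi(p) = 1 - (2p-1)^{\alpha}$ is the scalar building block already analyzed in Lemma~\ref{lemma:app-component-lemma}. (I read the parameter in the statement as the penalty parameter $\gamma$ of $\Phi$.) First I would differentiate $\phi$ twice, obtaining $\phi'(p) = -2\alpha(2p-1)^{\alpha-1}$ and hence $\phi''(p) = -4\alpha(\alpha-1)(2p-1)^{\alpha-2}$. Since $\alpha$ is an even natural number, the exponent $\alpha-2$ is also even and nonnegative, so $(2p-1)^{\alpha-2} \ge 0$ for every $p$ regardless of the sign of $2p-1$; combined with $\alpha(\alpha-1) > 0$, this yields $\phi''(p) \le 0$ on all of $[0,1]$ (indeed on $\mab{R}$). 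Thus each scalar component $\phi$ is concave.

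Next I would lift this coordinatewise concavity to $\Phi$. Because $\Phi$ is a sum of functions each depending on a single coordinate, its Hessian is diagonal with entries $\phi''(p_{i}) \le 0$, so the Hessian is negative semidefinite and $\Phi$ is concave on $[0,1]^{N}$.

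Finally I would handle the scalar multiplication. Multiplying a concave function by a positive scalar preserves concavity, so $\gamma \Phi(\B{p})$ is concave when $\gamma > 0$; multiplying by a negative scalar reverses the defining inequality and therefore turns concavity into convexity, so $\gamma \Phi(\B{p})$ is convex when $\gamma < 0$. This gives exactly the two cases claimed.

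I do not expect a genuine obstacle here. The only point requiring care is the parity bookkeeping: one must confirm that $\alpha$ even forces $\alpha-2$ to be even, so that the power $(2p-1)^{\alpha-2}$ is nonnegative and pins down the sign of $\phi''$. The transfer from scalar concavity to concavity of the separable sum $\Phi$ (via the diagonal Hessian) and the sign-flip under multiplication by $\gamma$ are both routine and immediate from the definitions.
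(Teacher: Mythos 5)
Your proposal follows essentially the same route as the paper: exploit separability of $\Phi$, compute the second derivative of each scalar component $\phi$, and flip concavity to convexity according to the sign of $\gamma$ (you are right that the $\lambda$ in the statement should read $\gamma$). In fact your computation $\phi''(p) = -4\alpha(\alpha-1)(2p-1)^{\alpha-2} \le 0$, with the parity observation that $\alpha-2$ is even and nonnegative, is the correct general form, whereas the paper's displayed second derivative $-4\gamma\alpha$ is only valid for $\alpha = 2$; your version closes that gap for arbitrary even $\alpha$.
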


\begin{proof}
    Note that $\gamma \Phi(\B{p}) = \gamma \sum_{i=1}^{N} \phi(p_{i}) = \gamma \sum_{i=1}^{N} (1-(2p_{i}-1)^{\alpha})$ is separable across its components $p_{i}$. 
    Thus, it is sufficient to prove that each $\gamma \phi_{i}(p_{i}; \alpha)$ is concave or convex in $p_{i}$ because the sum of the concave or convex functions is also concave (and vice versa). Thus, we consider the second derivative of $\gamma \phi_{i}(p_{i})$ with respect to $p_{i}$:
    \begin{equation*}
        \gamma \frac{d^{2} \phi_{i}(p_{i})}{dp_{i}^{2}} = - 4 \gamma \alpha.
    \end{equation*}
    If $\gamma > 0$, the second derivative is negative for all $p_{i} \in [0, 1]$, and this completes the proof that $\gamma \Phi(\B{p})$ is a concave function when $\gamma$ is positive (and vice versa).
\end{proof}
Combining Lemma \ref{lemma:app-component-lemma}, Lemma \ref{lemma:sum-penalty-enforcing} and Lemma \ref{lemma:sum-penalty-convex}, one can show the following theorem.
\begin{theorem}
    \label{theorem:app-gamma-annealing-limit}
    Under the assumption that the objective function $\hat{l}(\B{p};C)$ is bounded within the domain $[0, 1]^{N}$, as $\gamma \to +\infty$, the soft solutions $\B{p}^{\ast} \in \mathrm{argmin}_{\B{p}} \hat{r}(\B{p}; C, \B{\lambda}, \gamma)$ converge to the original solutions $\B{x}^{\ast} \in \mathrm{argmin}_{\B{x}} l(\B{x}; C, \B{\lambda})$. In addition, as $\gamma \to -\infty$, the loss function $\hat{r}(\B{p}; C, \B{\lambda}, \gamma)$ becomes convex, and the soft solution $\nicefrac{\B{1}_{N}}{2} = \mathrm{argmin}_{\B{p}} \hat{r}(\B{p}, C, \B{\lambda}, \gamma)$ is unique.
\end{theorem}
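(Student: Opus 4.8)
The plan is to reduce the theorem to the three lemmas already established and to treat the two limits separately, in each case comparing a minimizer of $\hat{r}$ against a single canonical competitor. The common ingredients are the summed form of Lemma~\ref{lemma:app-component-lemma}, namely that $\Phi(\B{p}) \geq 0$ on $[0,1]^{N}$ with equality exactly on $\{0,1\}^{N}$ and $\Phi(\B{p}) \leq N$ everywhere with equality only at $\B{1}_{N}/2$, together with the fact that $\hat{f}$ (hence $\hat{l}$) agrees with $f$ (hence $l$) at binary points.

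For the $\gamma \to +\infty$ direction, I would write $\B{p}^{\ast}(\gamma)$ for any minimizer and $\B{x}^{\ast}$ for an optimal binary solution. Since $\Phi(\B{x}^{\ast})=0$, the comparison $\hat{r}(\B{p}^{\ast}(\gamma)) \leq \hat{r}(\B{x}^{\ast}) = l(\B{x}^{\ast})$, combined with the uniform bound $|\hat{l}| \leq M$ supplied by the boundedness assumption, yields $\gamma\,\Phi(\B{p}^{\ast}(\gamma)) \leq l(\B{x}^{\ast}) + M$, so $\Phi(\B{p}^{\ast}(\gamma)) \to 0$. I would then exploit compactness of $[0,1]^{N}$: along any subsequence $\B{p}^{\ast}(\gamma_{k}) \to \bar{\B{p}}$, continuity of $\Phi$ forces $\bar{\B{p}} \in \{0,1\}^{N}$, and passing to the limit in $\hat{l}(\B{p}^{\ast}(\gamma_{k})) \leq l(\B{x}^{\ast})$ gives $l(\bar{\B{p}}) \leq l(\B{x}^{\ast})$, so that $\bar{\B{p}}$ is itself optimal. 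This is exactly the claimed convergence to an original solution.

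For the $\gamma \to -\infty$ direction, convexity comes from Lemma~\ref{lemma:sum-penalty-convex}: $\gamma\Phi$ is convex with curvature growing linearly in $|\gamma|$, so provided $\hat{l}$ is $C^{2}$ with Hessian bounded below, say by $-L I_{N}$, the Hessian of $\hat{r}$ becomes positive definite once $|\gamma|$ exceeds a threshold, making $\hat{r}$ strongly convex and its minimizer unique. To pin that minimizer at $\B{1}_{N}/2$, I would compare against $\B{1}_{N}/2$ directly: from $\hat{r}(\B{p}^{\ast}(\gamma)) \leq \hat{r}(\B{1}_{N}/2)$, dividing by the negative quantity $\gamma$ and using $\Phi(\B{1}_{N}/2)=N$ gives $\Phi(\B{p}^{\ast}(\gamma)) \geq N + (\hat{l}(\B{1}_{N}/2) - \hat{l}(\B{p}^{\ast}(\gamma)))/\gamma \to N$; since $\Phi \leq N$ everywhere, this squeezes $\Phi(\B{p}^{\ast}(\gamma)) \to N$, forcing every coordinate to $1/2$ by Lemma~\ref{lemma:sum-penalty-enforcing}.

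I expect the main obstacle to lie in the $\gamma\to+\infty$ case, specifically in upgrading the pointwise statement $\Phi(\B{p}^{\ast}(\gamma)) \to 0$ to genuine convergence of minimizers to an \emph{optimal} binary point rather than merely to \emph{some} binary point; the subsequence–compactness argument above is what closes this gap, and care is needed because the minimizer need not be unique for finite $\gamma$. A secondary subtlety is that the convexity claim as $\gamma \to -\infty$ implicitly requires regularity of $\hat{l}$ (a Hessian bounded below) beyond mere boundedness, and I would state this explicitly, since boundedness of $\hat{l}$ alone controls the \emph{location} of the minimizer but not the convexity of the sum.
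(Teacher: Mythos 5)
Your proof is correct and follows the same high-level strategy as the paper---reduce everything to the three lemmas and let the penalty term dominate in each limit---but you carry it out with substantially more rigor than the paper does. The paper's own proof of this theorem is essentially a verbal domination argument (``the penalty term dominates the loss function \ldots any non-binary value results in an infinitely large penalty''), with no quantitative comparison and no argument that the limiting binary point is \emph{optimal} rather than merely binary. Your two additions close exactly these gaps: the comparison $\hat{r}(\B{p}^{\ast}(\gamma)) \le \hat{r}(\B{x}^{\ast}) = l(\B{x}^{\ast})$ gives the rate $\Phi(\B{p}^{\ast}(\gamma)) = O(1/\gamma)$, and the compactness--subsequence step is what actually delivers convergence of minimizers to an optimal binary point; the symmetric squeeze against $\B{1}_{N}/2$ handles the $\gamma \to -\infty$ case cleanly. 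You are also right, and the paper is not, that the convexity claim as $\gamma \to -\infty$ does not follow from boundedness of $\hat{l}$ alone: Lemma~\ref{lemma:sum-penalty-convex} makes $\gamma\Phi$ convex, but adding a bounded, wildly oscillating $\hat{l}$ can destroy convexity of the sum, so a curvature lower bound on $\hat{l}$ (or at least $C^{2}$ regularity with Hessian bounded below) is genuinely needed---and even then one should note that for $\alpha > 2$ the penalty's Hessian degenerates at $\B{1}_{N}/2$, so the strong-convexity threshold argument is cleanest at $\alpha = 2$. One small point you could make explicit: your limit passage $\hat{l}(\B{p}^{\ast}(\gamma_{k})) \to \hat{l}(\bar{\B{p}})$ uses continuity of $\hat{l}$, which is not among the theorem's stated hypotheses (only boundedness) though it is consistent with the paper's differentiability assumption elsewhere.
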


\begin{proof}
As $\lambda \to +\infty$, the penalty term $\Phi(\B{p})$ dominates the loss function $\hat{r}(\B{p}; C, \B{\lambda}, \gamma)$. According to Lemma \ref{lemma:sum-penalty-enforcing}, this penalty term forces the optimal solution $\B{p}^{\ast}$ to a binary vector whose components, for all $i \in [N]$ $p_{i}^{\ast}$ that are either $0$ or $1$ because any non-binary value results in an infinitely large penalty. 
This effectively restricts the feasible region to the vertices of the unit hypercube, which correspond to the binary vector in $\{0, 1\}^{N}$.
Thus, as $\lambda \to \infty$, the solutions to the relaxed problem converge to those of the original problem.
As $\lambda \to - \infty$, the penalty term $\Phi(\B{p})$ also dominates the loss function $\hat{r}(\B{p}; C, \B{\lambda}, \gamma)$ and the $\hat{r}(\B{p}; C, \B{\lambda})$ convex function from Lemma \ref{lemma:sum-penalty-convex}.
According to Lemma \ref{lemma:sum-penalty-enforcing}, this penalty term forces the optimal solution $\B{p}^{\ast}=\B{1}_{N}/2$. 
\end{proof}
The theorem holds for the cross entropy penalty given by
\begin{equation}
    \Phi(\B{p}) = \sum_{i=1}^{N} \left(p_{i} \log (p_{i}) + (1-p_{i}) \log (1-p_{i})  \right)
\end{equation}
in the UL-based solver using data or history \citep{sun2022annealed, sanokowski2024variational} because $\Phi(\B{p})$ can similarly satisfy Lemma \ref{lemma:app-component-lemma}, Lemma \ref{lemma:sum-penalty-enforcing} and Lemma \ref{lemma:sum-penalty-convex}.
\begin{corollary}
    Theorem \ref{theorem:app-gamma-annealing-limit} holds for the following penalty term:
    \begin{equation}
    \Phi(\B{p}) = \sum_{i=1}^{N} \left(p_{i} \log (p_{i}) + (1-p_{i}) \log (1-p_{i})  \right).
\end{equation}
\end{corollary}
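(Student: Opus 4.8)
The plan is to prove the corollary by showing that the cross-entropy penalty $\Phi_{\mathrm{CE}}(\B{p}) = \sum_{i=1}^{N}\left(p_i \log p_i + (1-p_i)\log(1-p_i)\right)$ satisfies the three structural properties established in Lemma \ref{lemma:app-component-lemma}, Lemma \ref{lemma:sum-penalty-enforcing}, and Lemma \ref{lemma:sum-penalty-convex} for the polynomial penalty. Since the proof of Theorem \ref{theorem:app-gamma-annealing-limit} uses \emph{only} these three lemmas as black boxes, re-establishing them for $\Phi_{\mathrm{CE}}$ will let the same argument go through verbatim. The key observation is that $\Phi_{\mathrm{CE}}$ is again separable, $\Phi_{\mathrm{CE}}(\B{p}) = \sum_{i=1}^{N}\psi(p_i)$ with the single-variable component $\psi(p) = p\log p + (1-p)\log(1-p)$, so all the analysis reduces to studying the scalar function $\psi$ on $[0,1]$ (with the standard convention $0\log 0 = 0$ at the endpoints).

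First I would analyze $\psi$ pointwise, the analogue of Lemma \ref{lemma:app-component-lemma}. Computing $\psi'(p) = \log p - \log(1-p) = \log\!\bigl(\tfrac{p}{1-p}\bigr)$, which vanishes exactly at $p = \tfrac12$, and $\psi''(p) = \tfrac{1}{p} + \tfrac{1}{1-p} > 0$ on $(0,1)$, I would conclude that $\psi$ is strictly convex with a unique minimum at $p=\tfrac12$, where $\psi(\tfrac12) = -\log 2$. By symmetry $\psi(p)=\psi(1-p)$, and the endpoint values are $\psi(0)=\psi(1)=0$, which are the maxima on $[0,1]$. Note the orientation is \emph{flipped} relative to the polynomial $\phi(p)=1-(2p-1)^\alpha$: for $\Phi$, the continuous point $\tfrac12$ is the maximum and the discrete points are minima, whereas for $\Phi_{\mathrm{CE}}$ the discrete points are maxima and $\tfrac12$ is the minimum. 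This means the roles of $\gamma\to+\infty$ and $\gamma\to-\infty$ are interchanged relative to the sign conventions in the stated lemmas, so to match Theorem \ref{theorem:app-gamma-annealing-limit} one should work with $-\Phi_{\mathrm{CE}}$, or equivalently read the corollary as asserting the limiting behavior with the appropriate sign. I would flag this sign bookkeeping explicitly and then state the enforcement lemma (analogue of Lemma \ref{lemma:sum-penalty-enforcing}): in the limit where the penalty forces minimization of each $\psi(p_i)$ toward its extreme values, the relaxed solution is driven to $\B{1}_N/2$ when the continuous point is selected and to a binary vector in $\{0,1\}^N$ otherwise.

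For the convexity/concavity analogue of Lemma \ref{lemma:sum-penalty-convex}, I would invoke $\psi''(p) = \tfrac{1}{p(1-p)} > 0$ throughout $(0,1)$: thus $\gamma\,\Phi_{\mathrm{CE}}(\B{p})$ is convex whenever $\gamma>0$ and concave whenever $\gamma<0$, since the Hessian is the diagonal matrix $\gamma\,\mathrm{diag}\bigl(\tfrac{1}{p_i(1-p_i)}\bigr)$, definite with the sign of $\gamma$, and sums of convex functions remain convex. With all three scalar facts in hand, the convergence argument of Theorem \ref{theorem:app-gamma-annealing-limit} applies unchanged: whichever sign of $\gamma$ makes the penalty dominate and pushes each coordinate toward the vertices of the hypercube recovers the original discrete problem, while the opposite sign makes the total loss convex with the unique minimizer $\B{1}_N/2$.

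The main obstacle is not any deep computation but rather the careful handling of two technical points. The first is the boundary behavior of $\psi$ and its derivative: $\psi'(p)$ diverges to $\mp\infty$ as $p\to 0^+$ or $p\to 1^-$, and $\psi''$ blows up at the endpoints, so the clean ``$\gamma\to\pm\infty$ forces binary'' statement requires treating the domain as the closed cube $[0,1]^N$ with the limiting values $\psi(0)=\psi(1)=0$ assigned by continuity, and arguing the minimizers converge to the vertices via the strict inequality $\psi(p) < 0$ for $p\in(0,1)$. The second, and the one I would emphasize, is reconciling the flipped sign convention so that the corollary statement is literally correct: because $\Phi_{\mathrm{CE}}$ is convex (not concave) with its extremum at $\tfrac12$ being a minimum, one must verify that the direction of the annealing limit matches the claim of Theorem \ref{theorem:app-gamma-annealing-limit}, and I would resolve this by noting that the theorem's conclusion is invariant under replacing $\Phi$ by any separable function whose single-coordinate component is strictly convex, symmetric about $\tfrac12$, and attains its extreme values at $\{0,\tfrac12,1\}$ — a hypothesis $\Phi_{\mathrm{CE}}$ satisfies once the sign of $\gamma$ is interpreted consistently.
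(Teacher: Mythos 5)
Your approach is the same as the paper's: the paper's entire proof of this corollary is the one-sentence assertion that $\Phi_{\mathrm{CE}}$ ``can similarly satisfy'' Lemma \ref{lemma:app-component-lemma}, Lemma \ref{lemma:sum-penalty-enforcing}, and Lemma \ref{lemma:sum-penalty-convex}, which is exactly the reduction you carry out. The difference is that you actually do the verification and, in doing so, correctly catch something the paper glosses over: $\psi(p)=p\log p+(1-p)\log(1-p)$ is convex with its \emph{minimum} $-\log 2$ at $p=\nicefrac{1}{2}$ and its \emph{maximum} $0$ at the endpoints, i.e., the orientation is flipped relative to $\phi(p)=1-(2p-1)^{\alpha}$, so the roles of $\gamma\to+\infty$ and $\gamma\to-\infty$ are interchanged (equivalently, the corollary is literally true only for $-\Phi_{\mathrm{CE}}$, the entropy). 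The paper's main text even describes $\Phi_{\mathrm{CE}}$ as ``convex \ldots with a unique maximum at $\nicefrac{\B{1}_{N}}{2}$,'' which is internally inconsistent; your explicit sign bookkeeping and your note on the divergence of $\psi'$ at the boundary are both improvements over the paper's argument rather than deviations from it.
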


\section{Generalization of CRA}\label{sec:generalization-cra}

\subsection{Generalization for to Potts variable optimization}\label{subsec:app-generalize-potts}
This section generalize the penalty term $\Phi(\B{\theta}; C)$ introduced for binary variables to $K$-Potts variables.
$K$-Potts variable is the Kronecker delta $\delta(x_{i}, x_{j})$ which equas one whenever $x_{i} = x_{j}$ and zero otherwise and a decision variable takes on $K$ different values, $\forall i \in [N],~x_{i} = 1, \ldots, K$.
For example, graph $K$-coloring problems can be expressed as 
\begin{equation}
    f(\B{x}; G(V, E)) = - \sum_{(i, j) \in E} \delta(x_{i}, x_{j})
\end{equation}
For Potts variables, the output of the GNN is $\B{h}_{\B{\theta}}^{L}=P(\B{\theta}; C) \in \mathbb{R}^{N \times K}$ and the penalty term can be generalized as follows. 
\begin{equation}
    \Phi(\B{\theta}; C) = \sum_{i=1}^{N} \left(1-\sum_{k=1}^{K} \left(K P_{i, k}(\B{\theta}; C)-1 \right)^{\alpha}   \right).
\end{equation}

\section{Additional implementation details}\label{sec:app-additional-implementation}

\subsection{Architecture of GNNs}\label{subsec:app-architecture-gnns}
We describe the details of the GNN architectures used in all numerical experiments.
The first convolutional layer takes $H_{0}$-dimensional node embedding vectors, $\B{h}_{\B{\theta}}^{0}$ for each node, as input, yielding $H_{1}$-dimensional feature vectors $\B{h}_{\B{\theta}}^{1}$. 
Then, the ReLU function is applied as a component-wise nonlinear transformation. 
The second convolutional layer takes the $H_{1}$-dimensional feature vectors, $\B{h}^{1}_{\B{\theta}}$, as input, producing a $H^{(2)}$-dimensional vector $\B{h}^{2}_{\B{\theta}}$. 
Finally, a sigmoid function is applied to the $H^{(2)}$-dimensional vector $\B{h}^{2}_{\B{\theta}}$, and the output is the soft solution $\B{p}_{\B{\theta}} \in [0, 1]^{N}$. 
As in \citep{schuetz2022combinatorial}, we set $H_{0}=\mathrm{int}(N^{0.8})$ or , $H_{1}=\mathrm{int}(N^{0.8}/2)$ and $H^{2}=1$ for both \texttt{GCN} and \texttt{GraphSAGE}
.

\subsection{Training settings}\label{subsec:app-learning-setting}
For all numerical experiments, we use the AdamW \citep{kingma2014adam} optimizer with a learning rate of $\eta =10^{-4}$ and weight decay of $10^{-2}$, and we train the GNNs for up to $10^{5}$ epochs with early stopping set to the absolute tolerance $10^{-5}$ and patience $10^{3}$. 
As discussed in \citet{schuetz2022combinatorial}, the GNNs are initialized with five different random seeds for a single instance because the results are dependent on the initial values of the trainable parameters; thus selecting the best solution. 

\section{Experiment details}\label{sec:experiment-details}
\subsection{Benchmark problems}\label{sec:theoretical-background-co-problem}
\paragraph{MIS problems}
There are some theoretical results for MIS problems on RRGs with the node degree set to $d$, where each node is connected to exactly $d$ other nodes. 
The MIS problem is a fundamental NP-hard problem \citep{karp2010reducibility} defined as follows.
Given an undirected graph $G(V, E)$,  an independent set (IS) is a subset of nodes $\mac{I} \in V$ where any two nodes in the set are not adjacent.
The MIS problem attempts to find the largest IS, which is denoted $\mac{I}^{\ast}$. 
In this study, $\rho$ denotes the IS density, where $\rho = |\mac{I}|/|V|$.
To formulate the problem, a binary variable $x_{i}$ is assigned to each node $i \in V$. Then the MIS problem is formulated as follows: 
\begin{equation}
    f(\B{x}; G, \lambda) = - \sum_{i \in V} x_{i} + \lambda \sum_{(i,j) \in E} x_{i} x_{j},
\end{equation}
where the first term attempts to maximize the number of nodes assigned $1$, and the second term penalizes the adjacent nodes marked $1$ according to the penalty parameter $\lambda$.
In our numerical experiments, we set $\lambda=2$, following \citet{schuetz2022combinatorial}, no violation is observed as in \citep{schuetz2022combinatorial}. 
First, for every $d$, a specific value $\rho_{d}^{\ast}$, which is dependent on only the degree $d$, exists such that the independent set density $|\mac{I}^{\ast}|/|V|$ converges to $\rho_{d}^{\ast}$ with a high probability as $N$ approaches infinity \citep{bayati2010combinatorial}. 
Second, a statistical mechanical analysis provides the typical MIS density $\rho^{\mathrm{Theory}}_{d}$, as shown in Fig.~\ref{fig:mis-degree-result}, 
and we clarify that for $d > 16$, the solution space of $\mac{I}$ undergoes a clustering transition, which is associated with hardness in sampling \citep{barbier2013hard} because the clustering is likely to create relevant barriers that affect any algorithm searching for the MIS $\mac{I}^{\ast}$. 
Finally, the hardness is supported by analytical results in a large $d$ limit, which indicates that, while the maximum independent set density is known to have density $\rho^{\ast}_{d \to \infty} = 2 \log(d)/d$, to the best of our knowledge, there is no known algorithm that can find an independent set density exceeding $\rho^{\mathrm{alg}}_{d \to \infty}=\log(d)/d$ \citep{coja2015independent}.

\paragraph{MaxCut problems}\label{subsec:theory-maxcut}
The MaxCut problem is also a fundamental NP-hard problem \citep{karp2010reducibility} with practical application in machine scheduling \citep{alidaee19940}, image recognition \citep{neven2008image} and electronic circuit layout design \citep{deza1994applications}.
The MaxCut problem is also a fundamental NP-hard problem \citep{karp2010reducibility}
Given an graph $G=(V, E)$, a cut set $\mac{C} \in E$ is defined as a subset of the edge set between the node sets dividing $(V_{1}, V_{2} \mid V_{1} \cup V_{2} = V,~V_{1} \cap V_{2} = \emptyset)$. 
The MaxCut problems aim to find the maximum cut set, denoted $\mac{C}^{\ast}$. 
Here, the cut ratio is defined as $\nu = |\mathcal{C}|/|\mathcal{V}|$, where $|\mathcal{C}|$ is the cardinality of the cut set.
To formulate this problem, each node is assigned a binary variable, where $x_{i}=1$ indicates that node $i$ belongs to $V_{1}$, and $x_{i}=0$ indicates that the node belongs to $V_{2}$. Here, $x_{i} + x_{j} - 2 x_{i} x_{j} = 1$ holds if the edge $(i, j) \in \mac{C}$. As a result, we obtain the following:
\begin{equation}
\label{eq:maxcut-qubo}
    f(\B{x}; G) = \sum_{i<j} A_{ij} (2x_{i}x_{j} - x_{i}-x_{j}).
\end{equation}
This study has also focused on the MaxCut problems on $d$-RRGs, for which several theoretical results have been established. Specifically, for each $d$, the maximum cut ratio is given by $\nu_{d}^{\ast} \approx d/4+P_{\ast} \sqrt{d/4} + \mac{O}(\sqrt{d})$, where $P_{\ast} = 0.7632\ldots$ with a high probability as $N$ approaches infinity \citep{parisi1980sequence, dembo2017extremal}. 
Thus, we take $\nu_{d}^{\mathrm{UB}} = d/4+P_{\ast} \sqrt{d/4}$ as an upper bound for the maximum cut ratio in the large $n$ limit.

\paragraph{DBM problems}
Here, the topologies are taken from the Cora citation network \citep{sen2008collective}, where each node has $1{,}433$ bag-of-words features, and each edge represents likelihood, as predicted by a machine learning model.
    \citet{mandi2022decision} focused on disjoint topologies within the given topology, and they created $27$ distinct instances with varying properties. 
    Each instance comprises $100$ nodes representing scientific publications, divided into two groups of $50$ nodes $N_{1}$ and $N_{2}$. 
    The optimization task is to find the maximum matching, where diversity constraints ensure connections among papers in the same field and between papers of different fields. 
    This is formulated using a penalty method as follows.
    \begin{multline}
    \label{eq:matching-cost-function}
        l(\B{x}; C, M, \B{\lambda}) = - \textstyle\sum_{ij} C_{ij} x_{ij} + \lambda_{1} \textstyle\sum_{i}\mathrm{ReLU}\Big(\textstyle\sum_{j} x_{ij}-1 \Big) + \lambda_{2} \textstyle\sum_{j}\mathrm{ReLU}\Big(\sum_{i} x_{ij} - 1\Big) \\
        + \lambda_{3} \mathrm{ReLU}\Big(p\textstyle\sum_{ij} x_{ij} - \textstyle\sum_{ij} M_{ij} x_{ij} \Big) + \lambda_{4} \mathrm{ReLU}\Big(q\sum_{ij} x_{ij} - \sum_{ij} (1-M_{ij}) x_{ij} \Big),
    \end{multline}
    where $C \in \mab{R}^{N_{1} \times N_{2}}$ represents the likelihood of a link between each pair of nodes, an indicator $M_{ij}$ is set to $0$ if article $i$ and $j$ share the same subject field ($1$ otherwise) $\forall i \in N_{1}$, and $j \in N_{2}$. The parameters $p, q \in [0, 1]$ represent the probability of pairs sharing their field and of unrelated pairs, respectively. 
    As in \citet{mandi2022decision}, we explore two variations of this problem, with $p=q=$ being $25$\% and $5$\%, respectively, and these variations are referred to as Matching-1 and Matching-2, respectively. In this experiment, we set $\lambda_{1}=\lambda_{2}=10$ and $\lambda_{3}=\lambda_{4}=25$.

\subsection{GNNs}\label{subsec:graph-neural-network}
A GNN \citep{gilmer2017neural, scarselli2008graph} is a specialized neural network for representation learning of graph-structured data. 
GNNs learn a vectorial representation of each node in two steps, i.e., the aggregate and combine steps. The aggregate step employs a permutation-invariant function to generate an aggregated node feature, and in the combine step, the aggregated node feature is passed through a trainable layer to generate a node embedding, known as "message passing" or the "readout phase."
Formally, for a given graph $G = (V, E)$, where each node feature $\B{h}^{0}_{v} \in \mab{R}^{H_{0}}$ is attached to each node $v \in V$, the GNN updates the following two steps iteratively. First, the aggregate step at each $l$-th layer is defined as follows:
\begin{equation}
    \B{a}_{v}^{l} = \mathrm{Aggregate}_{\B{\theta}}^{l}\left(\{h_{u}^{l-1}, \forall u \in \mac{N}_{v} \}\right),
\end{equation}
where the neighborhood of $v \in V$ is denoted $\mac{N}_{v} = \{u \in V \mid (v, u) \in E \}$, $\B{h}_{u}^{l-1}$ is the node feature of the neighborhood, and $\B{a}_{v}^{l}$ is the aggregated node feature of the neighborhood. Then, the combined step at each $l$-th layer is defined as follows:
\begin{equation}
    \B{h}_{v}^{l} = \mathrm{Combine}^{l}_{\B{\theta}}(\B{h}_{v}^{l-1}, \B{a}_{v}^{l}),
\end{equation}
where $\B{h}_{v}^{l} \in \mab{R}^{H_{l}}$ denotes the node representation at the $l$-th layer. 
Here, the hyperparameters for the total number of layers $L$ and the intermediate vector dimension $N^{l}$ are determined empirically. 
Although numerous implementations of GNN architectures have been proposed to date, the most basic and widely used architecture is the GCN \citep{scarselli2008graph}, which is given as follows:
\begin{equation}
    \B{h}_{v}^{l} = \sigma \left(W^{l} \sum_{u \in \mac{N}(v)} \frac{\B{h}_{u}^{l-1}}{|\mac{N}(v)|} + B^{l} \B{h}_{v}^{l-1}  \right),
\end{equation}
where $W^{l}$ and $B^{l}$ are trainable parameters, $|\mac{N}(v)|$ serves as a normalization factor, and $\sigma: \mab{R}^{H_{l}} \to \mab{R}^{H_{l}}$ is some component-wise nonlinear activation function, e.g., the sigmoid or ReLU function.

\section{Additional experiments}\label{sec:additional-experimet}

\subsection{Numerical validation of practical issues presented in Section \ref{sec:how-can-it-fail}}
\label{subsec:app-how-can-it-fail}
In this sectioin, we will examine the issue (I) with continuous relaxations and the issue (II), the difficulties of optimization, as pointed out by previous studies \citep{wang2023unsupervised, angelini2023modern}, in the NP-hard problems of MIS and the MaxCut problem.
Therefere, we conducted numerical experiments using the PI-GNN solver for MIS and MaxCut problems on RRGs with higher degrees. 
To ensure the experimental impartiality, we adhered to the original settings of the PI-GNN solver \citep{schuetz2022graph}. Refer to Section \ref{sec:experiment-details} for the detailed experimental settings.
Fig.~\ref{fig:fail-example} (top) shows the solutions obtained by the PI-GNN solver as a function of the degree $d$ for the MIS and MaxCut problems with varying system sizes $N$. 
These results indicate that finding independent and cut sets becomes unfeasible as the RRG becomes denser. 
In addition, to clarify the reasons for these failures, we analyzed the dynamics of the cost function for MIS problems with $N=10{,}000$, with a specific focus on a graph with degrees $d=5$ and $d=20$, as depicted in Fig. ~\ref{fig:fail-example} (bottom).
For the $d=5$ case, the cost function goes over the plateau of $\hat{l}(\B{\theta}; G, \B{\lambda})=0$ with $\B{p}_{\theta}(G) = \B{0}_{N}$, as investigated in the histogram, eventually yielding a solution comparable to those presented by \citet{schuetz2022combinatorial}. 
Conversely, in the $d=20$ case, the cost function remains stagnant on the plateau of $\hat{l}(\B{\theta}; G, \B{\lambda})=0$ with $\B{p}_{\theta}(G)=\B{0}_{N}$, thereby failing to find any independent nodes. 
Interpreting this phenomenon, we hypothesize that the representation capacity of the GNN is sufficiently large, leading us to consider the optimization of $\hat{L}_{\mathrm{MIS}}(\B{\theta}; G, \lambda)$ and $\hat{L}_{\mathrm{MaxCut}}(\B{\theta}; G)$ as a variational optimization problem relative to $\B{p}_{\theta}$. 
In this case, $\B{p}_{\theta}^{\ast} = \B{0}_{N}$ satisfies the first-order variational optimality conditions $\delta \hat{l}_{\mathrm{MIS}}/\delta \B{p}_{\theta}|_{\B{p}_{\theta}=\B{p}^{\ast}} = \delta \hat{l}_{\mathrm{MaxCut}}/\delta \B{p}_{\theta}|_{\B{p_{\theta}}=\B{p}^{\ast}} = \B{0}_{N}$, which implies a potential reason for absorption into the plateau. 
However, this does not reveal the conditions for the convergence to the fixed point $\B{p}^{\ast}$ during the early learning stage or the condition to escape from the fixed point $\B{p}^{\ast}$. 
Thus, an extensive theoretical evaluation through stability analysis remains an important topic for future work.

\begin{figure}[tb]
    \centering
    \includegraphics[width=0.8\linewidth]{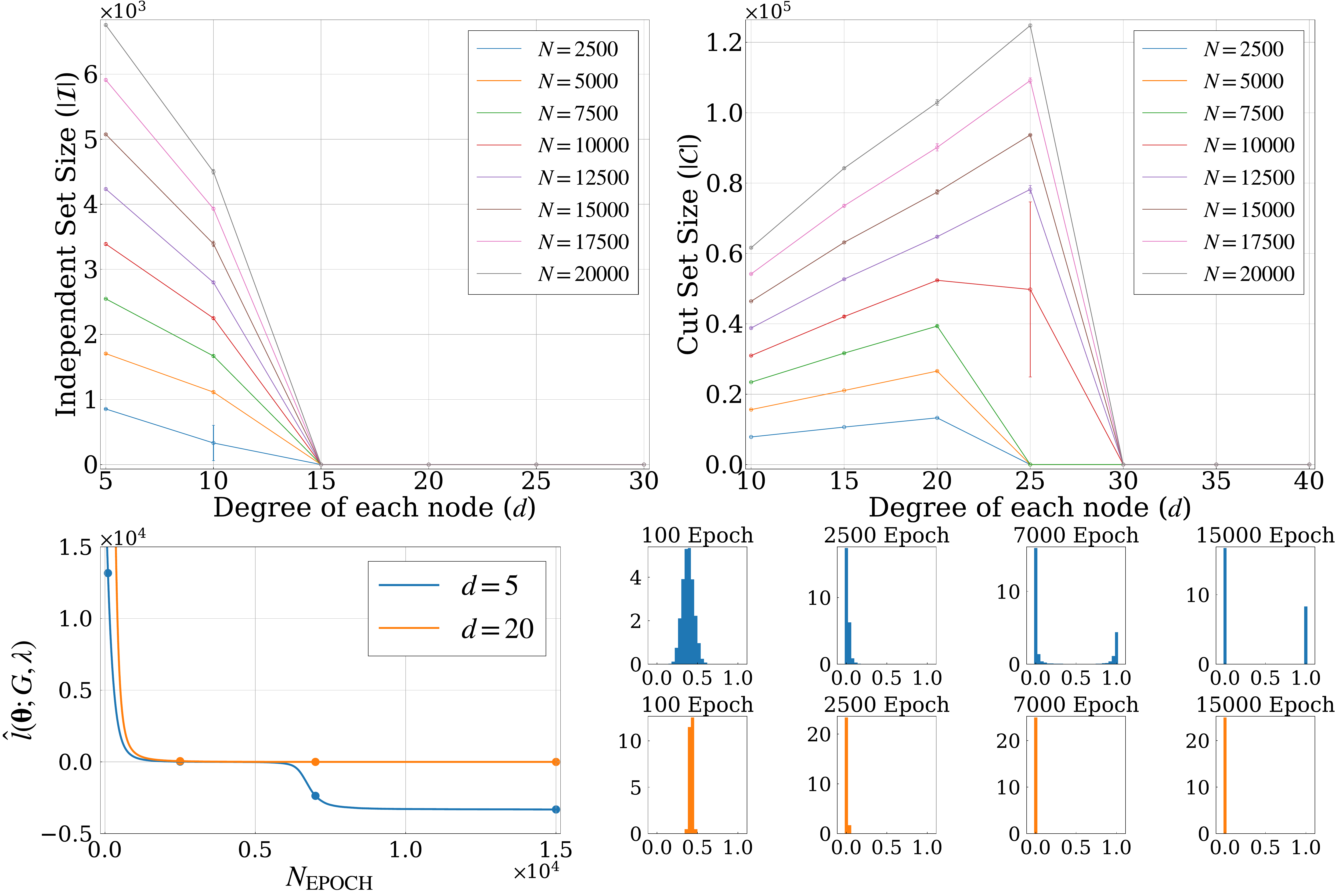}
    \caption{The top graph shows the independent set density for MIS problems (left) and the cut ratio for MaxCut problems (right) as a function of degree $d$ using the PI-GNN solver with varying system size $N$. 
    Each data point represents the average result of five different graph instances, with the error bars indicating the standard deviation of those results. The bottom graph shows the cost as a function of the number of parameter updates $N_{\mathrm{EPOCH}}$, for $N=10000$ MIS problems on $5$-RRG and $20$-RRG.
    The histogram represents the relaxed vector distribution with varying numbers of parameter updates $N_{\mathrm{EPOCH}}$. 
    Each point in the bottom-left plot is linked to the corresponding bottom-right histogram.}
    \label{fig:fail-example}
\end{figure}

In summary, UL-based solver, minimizing $\B{\theta}$ can be challenging and unstable.
In particular, the PI-GNN solver, which is one of the UL-based solvers employing GNNs, fails to optimize $\B{\theta}$ due to a local solution in complex CO problems on relatively dense graphs where the performance of greedy algorithms worsens.
This issues can be potential bottleneck for more practical and relatively dense problems, making it challenging to employ the PI-GNN solver confidently.

\subsection{Additional results of MIS}\label{subsec:app-additional-results-mis}
\begin{table}[tb]
\label{table:mis-satlib-er}
\centering
\caption{ApR and runtime are evaluated on three benchmarks provided by DIMES \citep{qiu2022dimes}. The ApR is assessed relative to the results obtained by KaMIS. Runtime is reported as the total clock time, denoted in seconds (s), minutes (m), or hours (h). The runtime and solution quality are sourced from iSCO \citep{sun2023revisiting}. 
The baselines include solvers from the Operations Research (OR) community, as well as data-driven approaches utilizing Reinforcement Learning (RL), Supervised Learning (SL) combined with Tree Search (TS), Greedy decoding (G), or sampling (S). Methods that fail to produce results within 10 times the time limit of DIMES are marked as N/A.}
\begin{tabular}{cc|cc|cc}
     \toprule
     \multirow{2}{*}{Method} & \multirow{2}{*}{Type}  & 
     \multicolumn{2}{c|}{ER-[700-800]} & \multicolumn{2}{c}{ER-[9000-11000]}\\
     & & ApR & Time & ApR & Time \\
     \midrule
     KaMIS & OR & 1.000  & 52.13m & 1.000  & 7.6h\\
     Gurobi & OR & 0.922  & 50.00m & N/A & N/A \\
     \midrule
     \multirow{2}{*}{Intel}  & SL+TS  & 0.865  & 20.00m & N/A & N/A \\
    & SL+G  & 0.777  &  6.06m & 0.746  & 5.02m \\
     DGL & SL+TS  & 0.830  & 22.71m & N/A & N/A \\
     LwD & RL+S  & 0.918  & 6.33m & 0.907  & 7.56m \\
     \multirow{2}{*}{DIMES} & RL+G  & 0.852 & 6.12m & 0.841 & 5.21m\\
     & RL+S  & 0.937 & 12.01m & 0.873 & 12.51m\\
     \midrule
     \multirow{2}{*}{iSCO} & fewer steps & 0.998 & 1.38m & 0.990 & 9.38m \\
     & more steps  & 1.006 & 5.56m & 1.008 & 1.25h\\
     \midrule
     CRA & UL-based & 0.928  & 47.30m & 0.963 & 1.03h \\
     \bottomrule
\end{tabular}
\end{table}
We evaluate our method using the MIS benchmark dataset from recent studies \citep{NEURIPS2023_f9ad87c1, qiu2022dimes}, which includes graphs from SATLIB \citep{hoos2000satlib} and Erd\H{o}s–R\'{e}nyi graphs (ERGs) of varying sizes. Following \citet{sun2023revisiting}, our test set consists of $500$ SATLIB graphs, each containing between $403$ and $449$ clauses with up to $1{,}347$ nodes and $5{,}978$ edges, $128$ ERGs with $700$ to $800$ nodes each, and $16$ ERGs with $9{,}000$ to $11{,}000$ nodes each.
We conducted numerical experiments on PQQA using four different configurations: parallel runs with $S=100$ or $S=1000$ and shorter steps (3000 steps) or longer steps (30000 steps), similar to the approach in iSCO \citep{sun2023revisiting}. 
Table \ref{table:mis-satlib-er} presents the solution quality and runtime results. 
The results show that CRA, which optimizes the relaxed variables as an optimization of GNN parameters, takes extra time for smaller ER-[700-800] instances due to the smaller number of decision variables. However, for larger instances, CRA achieves results comparable to iSCO. Although limited space makes it difficult to present other benchmark results employed by iSCO, such as MaxCut and MaxClique, numerical experiments on these benchmarks also show that CRA is less effective for small problems. However, for larger problems, the results are comparable to or slightly inferior to those of iSCO. 

We also investigated the relationship between the order of the graph and the solving time of the solver, and the results are shown in Table \ref{table:degree-dependence-mis-RRG} and \ref{table:degree-dependence-mis-ERG}.
The results demonstrate that the runtime remains nearly constant as graph order and density increase, indicating effective scalability with denser graphs.

\begin{table}[tp]
\centering
\label{table:degree-dependence-mis-RRG}
\caption{ApR of the MIS problem on $\mathrm{RRG}(N, d)$.  All the results are averaged based on 5 $\mathrm{RRG}$s with different random seeds.}
\begin{tabular}{c|cccc}
\toprule
Problem & ApR (CRA) & ApR (PI) & Time (CRA) & Time (PI) \\
\midrule
$\mathrm{RRG}(1{,}000, 10)$ & 0.95 & 0.78 & 108 (s) & 98 (s) \\
$\mathrm{RRG}(1{,}000, 20)$ & 0.95 & 0.56 & 103 (s) & 92 (s) \\
$\mathrm{RRG}(1{,}000, 30)$ & 0.94 & 0.00 & 102 (s) & 88 (s) \\
$\mathrm{RRG}(1{,}000, 40)$ & 0.93 & 0.00 & 101 (s) & 82 (s) \\
$\mathrm{RRG}(1{,}000, 50)$ & 0.92 & 0.00 & 102 (s) & 82 (s) \\
$\mathrm{RRG}(1{,}000, 60)$ & 0.91 & 0.00 & 101 (s) & 91 (s) \\
$\mathrm{RRG}(1{,}000, 70)$ & 0.91 & 0.00 & 101 (s) & 86 (s) \\
$\mathrm{RRG}(1{,}000, 80)$ & 0.91 & 0.00 & 102 (s) & 93 (s) \\
$\mathrm{RRG}(5{,}000, 10)$ & 0.93 & 0.77 & 436 (s) & 287 (s) \\
$\mathrm{RRG}(5{,}000, 20)$ & 0.95 & 0.74 & 413 (s) & 280 (s) \\
$\mathrm{RRG}(5{,}000, 30)$ & 0.95 & 0.00 & 419 (s) & 283 (s) \\
$\mathrm{RRG}(5{,}000, 40)$ & 0.94 & 0.00 & 429 (s) & 293 (s) \\
$\mathrm{RRG}(5{,}000, 50)$ & 0.94 & 0.00 & 418 (s) & 324 (s) \\
$\mathrm{RRG}(5{,}000, 60)$ & 0.93 & 0.00 & 321 (s) & 302 (s) \\
$\mathrm{RRG}(5{,}000, 70)$ & 0.92 & 0.00 & 321 (s) & 325 (s) \\
$\mathrm{RRG}(5{,}000, 80)$ & 0.92 & 0.00 & 330 (s) & 305 (s) \\
\bottomrule
\end{tabular}
\end{table}

\begin{table}[tb]
\centering
\label{table:degree-dependence-mis-ERG}
\caption{The ApR of the MIS problem on $\mathrm{ERG}(N, d)$ is evaluated against the results of KaMIS. Due to time limitations, the maximum running time for KaMIS was constrained. The results below present the average ApRs and runtimes across five different random seeds.}
\begin{tabular}{lcccccc}
\toprule
Problem & CRA (ApR) & PI (ApR) & Time (CRA) & Time (PI) & Time (KaMIS) \\
\midrule
$\mathrm{ERG}(1{,}000, 0.05)$ & 0.97 & 0.01 & 103 (s) & 98 (s) & 100 (s) \\
$\mathrm{ERG}(1{,}000, 0.10)$ & 0.95 & 0.00 & 100 (s) & 98 (s) & 210 (s) \\
$\mathrm{ERG}(1{,}000, 0.15)$ & 0.94 & 0.00 & 100 (s) & 92 (s) & 315 (s) \\
$\mathrm{ERG}(1{,}000, 0.20)$ & 0.91 & 0.00 & 99 (s) & 88 (s) & 557 (s) \\
$\mathrm{ERG}(1{,}000, 0.25)$ & 0.93 & 0.00 & 98 (s) & 82 (s) & 733 (s) \\
$\mathrm{ERG}(1{,}000, 0.30)$ & 0.90 & 0.00 & 98 (s) & 82 (s) & 1000 (s) \\
$\mathrm{ERG}(1{,}000, 0.35)$ & 0.92 & 0.00 & 99 (s) & 91 (s) & 1000 (s) \\
$\mathrm{ERG}(1{,}000, 0.40)$ & 0.91 & 0.00 & 97 (s) & 86 (s) & 1000 (s) \\
\bottomrule
\end{tabular}
\end{table}

\subsection{Additional results of Gset}\label{subsec:app-additional-results}
We conducted experiments across the additional GSET collection to further validate that including CRA enhances PI-GNN results beyond previously achievable in Table \ref{tab:app-maxcut-gset}.

\begin{table}[tb]
    \caption{ApR for MaxCut on Gset}
    \label{tab:app-maxcut-gset}
        \begin{center}
        \begin{small}
        \begin{sc}
            \begin{tabular}{lccccccc}
                \toprule
                GRAPH & (NODES, EDGES) & GREEDY & SDP & RUN-CSP & PI-GNN & CRA  \\
                \midrule
                G1 & ($800$, $19{,}176$) & $0.942$ & $0.970$ &$0.979$ & $0.978$ & $\mathbf{1.000}$  \\
                G2 & ($800$, $19{,}176$) & $0.951$ & $0.970$ &$0.981$ & $0.976$ & $\mathbf{0.998}$  \\
                G3 & ($800$, $19{,}176$) & $0.945$ & $0.972$ &$0.982$ & $0.972$ & $\mathbf{1.000}$  \\
                G4 & ($800$, $19{,}176$) & $0.949$ & $0.971$ &$0.980$ & $0.978$ & $\mathbf{0.999}$  \\
                G5 & ($800$, $19{,}176$) & $0.949$ & $0.970$ &$0.980$ & $0.978$ & $\mathbf{1.000}$  \\
                G14 & ($800$, $4{,}694$) & $0.946$ & $0.952$ &$0.956$ & $0.988$ & $\mathbf{0.994}$  \\
                G15 & ($800$, $4{,}661$) & $0.939$ & $0.958$ &$0.952$ & $0.980$ & $\mathbf{0.992}$  \\
                G16 & ($800$, $4{,}672$) & $0.948$ & $0.958$ &$0.953$ & $0.965$ & $\mathbf{0.990}$  \\
                G17 & ($800$, $4{,}667$) & $0.946$ & $-$     &$0.956$ & $0.967$ & $\mathbf{0.990}$  \\
                G22 & ($2{,}000$, $19{,}990$) & $0.923$ & $-$ &$0.972$ & $0.987$ & $\mathbf{0.998}$  \\
                G23 & ($2{,}000$, $19{,}990$) & $0.927$ & $-$ &$0.973$ & $0.968$ & $\mathbf{0.997}$  \\
                G24 & ($2{,}000$, $19{,}990$) & $0.927$ & $-$ &$0.973$ & $0.959$ & $\mathbf{0.998}$  \\
                G25 & ($2{,}000$, $19{,}990$) & $0.929$ & $-$ &$0.974$ & $0.974$ & $\mathbf{0.998}$  \\
                G26 & ($2{,}000$, $19{,}990$) & $0.924$ & $-$ &$0.974$ & $0.965$ & $\mathbf{0.998}$  \\
                G35 & ($2{,}000$, $11{,}778$) & $0.942$ & $-$ &$0.953$ & $0.968$ & $\mathbf{0.990}$  \\
                G36 & ($2{,}000$, $11{,}766$) & $0.942$ & $-$ &$0.953$ & $0.966$ & $\mathbf{0.991}$  \\
                G37 & ($2{,}000$, $11{,}785$) & $0.946$ & $-$ &$0.950$ & $0.966$  & $\mathbf{0.997}$  \\
                G38 & ($2{,}000$, $11{,}779$) & $0.943$ & $-$ & $0.949$ & $0.966$ & $\mathbf{0.991}$   \\
                G43 & ($1{,}000$, $9{,}990$) & $0.928$ & $0.968$ & $0.976$ &  $0.966$ & $\mathbf{0.995}$  \\
                G44 & ($1{,}000$, $9{,}990$) & $0.920$ & $0.955$ & $0.978$ & $0.968$  & $\mathbf{0.998}$  \\
                G45 & ($1{,}000$, $9{,}990$) &$0.930$ & $0.950$ & $0.979$ & $0.961$ & $\mathbf{0.998}$  \\
                G46 & ($1{,}000$, $9{,}990$) & $0.930$ & $0.960$ &$0.976$ & $0.974$ & $\mathbf{0.998}$  \\
                G47 & ($1{,}000$, $9{,}990$) & $0.931$ & $0.956$ &$0.976$ & $0.972$  & $\mathbf{0.997}$  \\
                G48 & ($3{,}000$, $6{,}000$) & $\mathbf{1.000}$ & $\mathbf{1.000}$ &$\mathbf{1.000}$ & $0.912$  & $\mathbf{1.000}$  \\
                G49 & ($3{,}000$, $6{,}000$) & $\mathbf{1.000}$ & $\mathbf{1.000}$ &$\mathbf{1.000}$ & $0.986$  & $\mathbf{1.000}$  \\
                G50 & ($3{,}000$, $6{,}000$) & $\mathbf{1.000}$ & $\mathbf{1.000}$ &$0.999$ & $0.990$  & $\mathbf{1.000}$  \\
                G51 & ($1{,}000$, $5{,}909$) & $0.949$ & $0.960$ &$0.954$ & $0.964$  & $\mathbf{0.991}$  \\
                G52 & ($1{,}000$, $5{,}916$) & $0.944$ & $0.957$ &$0.955$ & $0.961$  & $\mathbf{0.990}$  \\
                G53 & ($1{,}000$, $5{,}914$) & $0.945$ & $0.956$ &$0.950$ & $0.966$  & $\mathbf{0.992}$  \\
                G54 & ($1{,}000$, $5{,}916$) & $0.900$ & $0.956$ &$0.952$ & $0.970$  & $\mathbf{0.988}$  \\
                G55 & ($5{,}000$, $12{,}498$) & $0.892$ & $-$ &$0.978$ & $0.983$  & $\mathbf{0.991}$  \\
                G58 & ($5{,}000$, $29{,}570$) & $0.945$ & $-$ &$0.980$ & $0.966$  & $\mathbf{0.989}$  \\
                G60 & ($7{,}000$, $17{,}148$) & $0.889$ & $-$ &$0.980$ & $0.945$  & $\mathbf{0.991}$  \\
                G63 & ($7{,}000$, $41{,}459$) & $0.948$ & $-$ &$0.947$ & $0.968$  & $\mathbf{0.989}$  \\
                G70 & ($10{,}000$, $9{,}999$) & $0.886$ & $-$ &$0.970$ & $0.982$  & $\mathbf{0.992}$ \\
                \bottomrule
            \end{tabular}
        \end{sc}
        \end{small}
        \end{center}
\end{table}

\subsection{Additional results of TSP}\label{subsection:additional-results-TSP}
We conducted additional experiments on several TSP problems from the TSPLIB dataset\footnote{\url{http://comopt.ifi.uni-heidelberg.de/software/TSPLIB95/}}, presenting results that illustrate the $\alpha$-dependency.

Experiments calculated the ApR as the ratio of the optimal value to the CRA result, with the ApR representing the average and standard deviation over 3 seeds. 
The ``--'' symbol in PI-GNN denotes cases where most variables are continuous values and where no solution satisfying the constraint was found within the maximum number of epochs. 
The same GNN and optimizer settings were used as in the main text experiments in Section \ref{sub-sec:config}.

Table \ref{table:additional-experiments-TSP} shows that the CRA approach yielded solutions with an ApR exceeding 0.9 across various instances. 
Notably, for the burma14 problem, our method identified the global optimal solution (3,323) multiple times.
However, the optimal value can vary based on the specific GNN architecture and problem structure, indicating that a more comprehensive ablation study could provide valuable insights in future work.

\begin{figure}[tb]
    \centering
    \includegraphics[width=\linewidth]{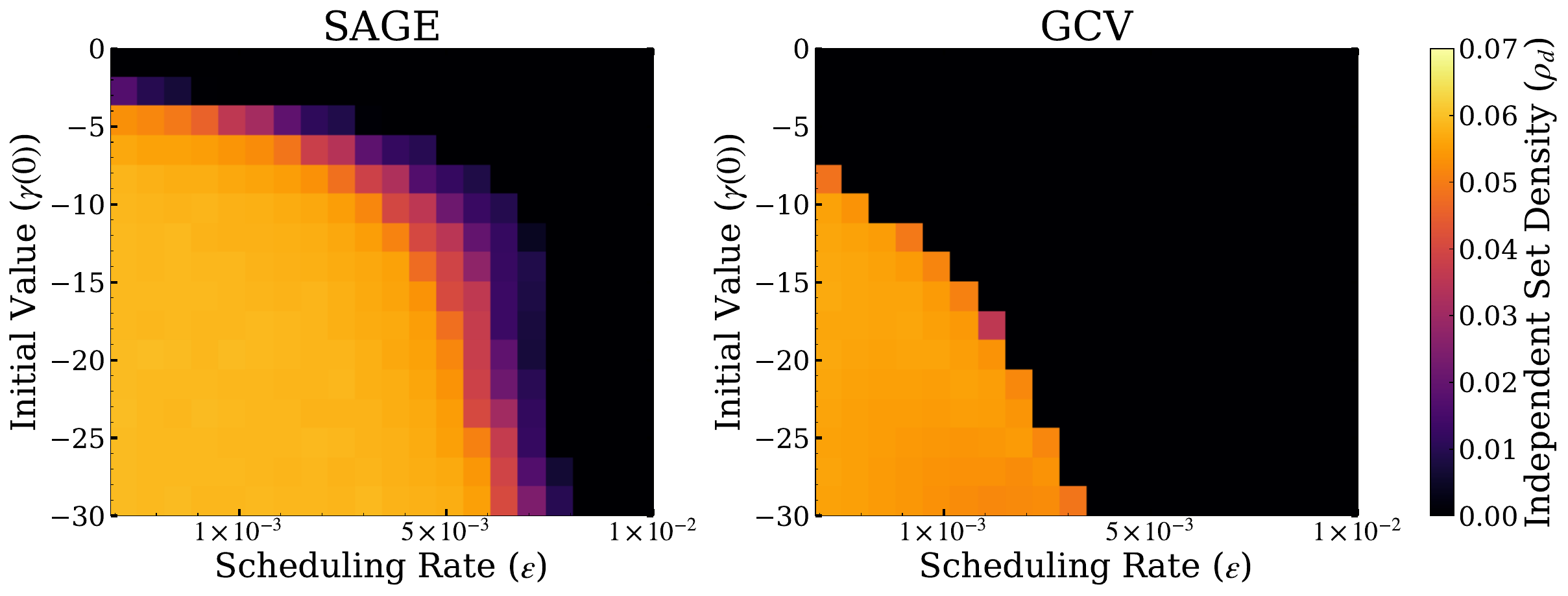}
    \caption{(Top) IS density of $N=10{,}000$ MIS problems on $100$-RRG as a function of initial scheduling $\gamma(0)$ and scheduling rate $\varepsilon$ values obtained by the CRA-PI-GNN solver using GraphSage (Left) and GCV (Right). 
    The color of the heat map represents the average IS over five different instances.}
    \label{fig:scheduling-ablation}
\end{figure}

\begin{figure}[tb]
    \centering
    \includegraphics[width=0.9\linewidth]{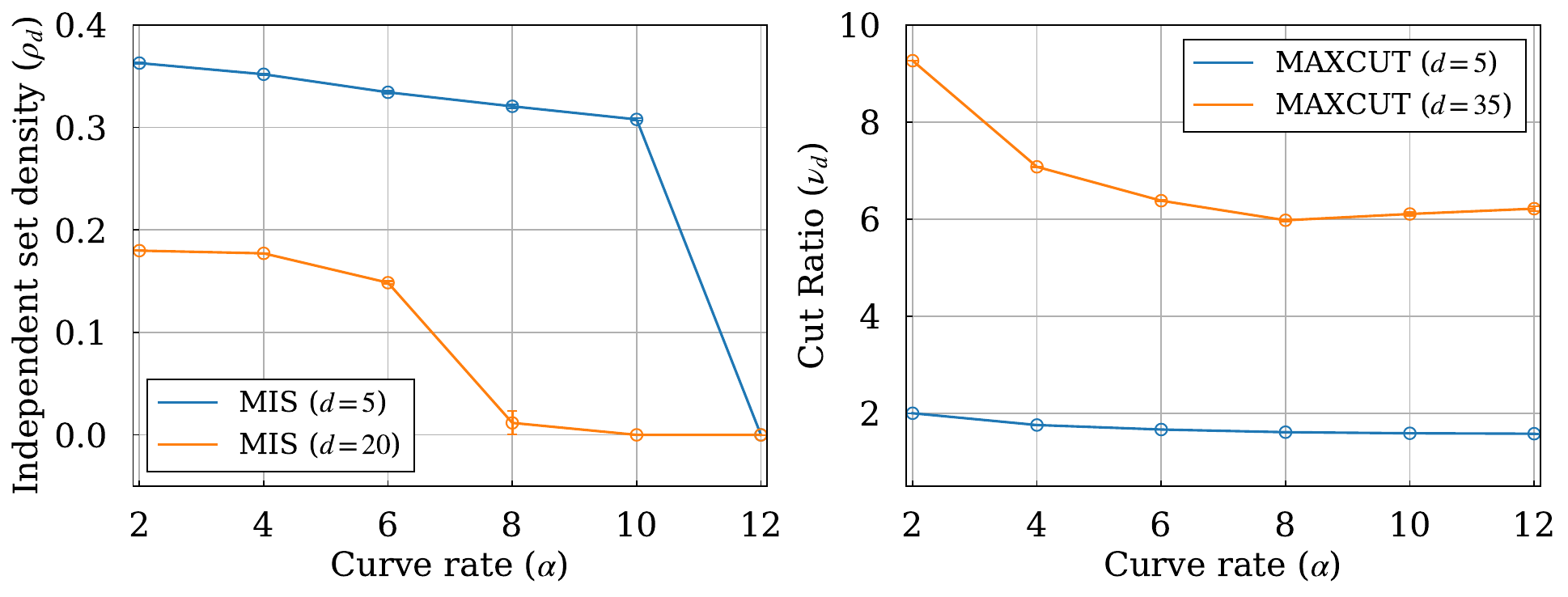}
    \caption{(Left) Independent set density as a function of curveture rate $\alpha$ in Eq.~\eqref{eq:penalty-cost}. (Right) Cut ratio as a function of curveture rate $\alpha$ in Eq.~\eqref{eq:penalty-cost}.
    Error bars represent the standard deviations of the results.}
    \label{fig:ablation_curverate}
\end{figure}

\begin{table}[tp]
\centering
\label{table:additional-experiments-TSP}
\caption{Comparison of ApR performance across different values of $p$ for various TSPLIB instances.}
\begin{tabular}{lcccc}
\toprule
& burma14 & ulysses22 & st70 & gr96 \\
\midrule
ApR ($p=2$) & $0.91 \pm 0.08$ & $0.89 \pm 0.03$ & $0.96 \pm 0.01$ & $0.81 \pm 0.05$ \\
ApR ($p=4$) & $0.98 \pm 0.10$ & $0.92 \pm 0.02$ & $0.85 \pm 0.03$ & $0.82 \pm 0.03$ \\
ApR ($p=6$) & $0.97 \pm 0.14$ & $0.88 \pm 0.07$ & $0.88 \pm 0.04$ & $0.90 \pm 0.05$ \\
ApR ($p=8$) & $0.99 \pm 0.06$ & $0.89 \pm 0.05$ & $0.80 \pm 0.02$ & $0.86 \pm 0.05$ \\
ApR (PI) & $0.736 \pm 1.21$ & -- & -- & -- \\
Optimum & 3,323 & 7,013 & 675 & 5,5209 \\
\bottomrule
\end{tabular}
\end{table}

\subsection{Ablation over initial scheduling value and scheduling rate}\label{subsec:ablation-study}
We conducted an ablation study focusing on the initial scheduling value $\gamma(0)$ and scheduling rate $\varepsilon$.
This numerical experiment was conducted under the configuration described in Section~\ref{sec:experiment} and \ref{sec:experiment-details}. Fig. ~\ref{fig:scheduling-ablation} shows the IS density of $N=10{,}000$ for MIS problems on a $100$-RRG as a function of the initial scheduling value $\gamma(0)$ and the scheduling rate $\varepsilon$ using the CRA-PI-GNN with both GraphSage and GCV.
As can be seen, smaller initial scheduling $\gamma(0)$ and scheduling rate $\varepsilon$ values typically yield better solutions. 
However, the convergence time increases progressively as the initial scheduling $\gamma(0)$ and scheduling rate $\varepsilon$ values become smaller.
In addition, GraphSage consistently produces better solutions even with relatively larger initial scheduling $\gamma(0)$ and scheduling rate $\varepsilon$ values, which implies that the GNN architecture influences both the solution quality and the effective regions of the initial scheduling $\gamma(0)$ and scheduling rate $\varepsilon$ values for the annealing process.

\subsection{Ablation over curve rate}\label{subsec:ablation-curve-rate}
Next, we investigated the effect of varying the curvature $\alpha$ in Eq.~\ref{eq:penalty-cost}.
Numerical experiments were performed on MIS problems with $10{,}000$ nodes and the degrees of $5$ and $20$, as well as MaxCut problems with $10{,}000$ nodes and the degrees of $5$ and $35$. 
The GraphSAGE architecture was employed, with other parameters set as Section~ \ref{sec:experiment} adn \ref{sec:experiment-details}.
As shown in Fig.~\ref{fig:ablation_curverate}, we observed that $\alpha=2$ consistently yielded the best scores across these problems.

\end{document}